%% file: article.tex
\documentclass[onefignum,onetabnum]{siamonline250211}

\usepackage{lipsum}
\usepackage{amsfonts}
\usepackage{graphicx}
\usepackage{epstopdf}
\usepackage{algorithmic}
\usepackage{tikz}
\usepackage[normalem]{ulem}
\ifpdf
  \DeclareGraphicsExtensions{.eps,.pdf,.png,.jpg}
\else
  \DeclareGraphicsExtensions{.eps}
\fi

\usepackage{enumitem}
\setlist[enumerate]{leftmargin=.5in}
\setlist[itemize]{leftmargin=.5in}

\newsiamremark{remark}{Remark}
\newsiamremark{hypothesis}{Hypothesis}
\crefname{hypothesis}{Hypothesis}{Hypotheses}
\newsiamthm{claim}{Claim}
\newsiamremark{fact}{Fact}
\crefname{fact}{Fact}{Facts}

\newcommand{\R}{\mathbb{R}}

\headers{Target localization using latent symmetries}{D. Dukov, M. Röntgen, and B. Davies}

\title{Target localization, identification and sensing using latent symmetries\thanks{Submitted to the editors \today.
\funding{M.R. acknowledges financial support from the ``Eastern Institute for Advanced Study Postdoctoral Excellent Programme''.}}}

\author{David Dukov\thanks{Mathematics Institute, University of Warwick, Coventry CV4~7AL, UK 
  (contact: \email{bryn.davies@warwick.ac.uk}).}
\and Malte Röntgen\thanks{Eastern Institute for Advanced Study, Eastern Institute of Technology Ningbo, Zhejiang, China.}
\and Bryn Davies\footnotemark[2]}

\usepackage{amsopn}

\ifpdf
\hypersetup{
  pdftitle={Target localization using latent symmetries}, 
  pdfauthor={D. Dukov, M. Röntgen, and B. Davies}
}
\fi

\usepackage{todonotes}

\begin{document}

\maketitle

\begin{abstract}
We show that an array of scatterers which has been designed to have latent (``hidden") symmetries can be used as a sensor. We use the capacitance matrix as a canonical model for three-dimensional hybridisation and study how the introduction of an ``intruder'' scatterer breaks the latent symmetries. By analysing the degree to which each symmetry is broken, we identify the radius of the intruder and localize its position. This can be achieved using a dictionary-based approach, however Bayesian inference or an artificial neural network (multi-layer perceptron) perform better in the presence of measurement noise. To our knowledge, this is the first time latent symmetries have been exploited successfully for sensing problems. It is also the first time latent symmetries have been observed in a three-dimensional open system that cannot be approximated by a sparse graph.
\end{abstract}

\begin{keywords}
inverse scattering, imaging, hidden symmetry, capacitance matrix, resonant modes
\end{keywords}

\begin{MSCcodes}
15A18, 78A46
\end{MSCcodes}

\section{Introduction}

Symmetry is a fundamental property of physical systems. Understanding the symmetries of a domain or of the governing equations is often one of the most effective ways to gain intuition about a scientific problem. The field of imaging and sensing is no exception, as symmetry is key to a wide range of methods including reconstruction algorithms, feature extraction, invariant representations, pattern recognition and other inverse problems \cite{SchonliebReview, datchev2011inverse}. 

Recently, there has been growing excitement about models that don't have any classical geometric symmetries but, nevertheless, display behaviours typical of symmetric systems. This is made precise through the notion of \emph{latent symmetry}, which is when a model has eigenfunctions that are ``symmetric'' in the sense of having parity at specific sites \cite{smith2019hidden}. For example, the eigenfunctions on the left in Figure~\ref{fig:introfig} are such that the values on sites 3 and 7 are all equal up to a change of sign. Latent symmetry is sometimes known as ``hidden'' symmetry since it doesn't require the system to have any classical geometric symmetries so can appear in unexpected places.

Originally proposed for the analysis of large graphs such as social networks \cite{bunimovich2019finding, smith2019hidden}, latent symmetries have since attracted significant interest from physicists. They have been investigated in a variety of setups, ranging from tight-binding systems \cite{Rontgen2021PRL126180601LatentSymmetryInducedDegeneracies,Kempkes2023QF21CompactLocalizedBoundaryStates,brandaoLatentSymmetryMinimal2026}, acoustics \cite{guoObservationChiralEdge2025,Rontgen2023PRL130077201HiddenSymmetriesAcousticWave}, and transmission line networks \cite{Rontgen2023PRA20044082EquireflectionalityCustomizedUnbalancedCoherent}, to topological systems \cite{Zheng2023PRB108L220303RobustTopologicalEdgeStates,Cui2023PRL131237201ExperimentalRealizationStableExceptional,linTopologicalAndersonInsulators2026,Rontgen2024PRB110035106TopologicalStatesProtectedHidden,eekHigherorderTopologyProtected2025}. Proposed applications include secure message transfer \cite{Sol2024AM362303891CovertScatteringControlMetamaterials} and the transfer for quantum states \cite{himmelStateTransferLatentsymmetric2026}.

In this work, we propose an exciting new application of latent symmetries to sensing problems by means of an open system of scatterers in three dimensions. We use the capacitance matrix as a canonical model to develop a proof-of-concept demonstration \cite{diaz2011positivity, ammari2024functional}. This was introduced to model the distributions of potential and charge in an array of conductors \cite{maxwell1873treatise} and has recently been generalised to describe the low-frequency (subwavelength) coupled resonances of a collection of acoustic resonators \cite{ammari2024functional}. An important subtlety of this model is that it is a dense matrix with relatively slowly decaying off-diagonal entries. This describes the fact that there is non-trivial long-range coupling between scatterers. This coupling is due to the relatively slow decay of the Green's function and is typical of systems in electrostatics or (low-frequency) acoustics. It means that  the capacitance matrix cannot be well approximated as a sparse graph, making it distinctly different from the systems in which latent symmetries have been observed previously.

The starting point for this study is the discovery of asymmetric arrangements of scatterers for which the capacitance matrix exhibits latent symmetries (Section~\ref{sec:latsym_examples}). This is groundbreaking in itself, as it is the first time latent symmetries without any classical geometric symmetries have been observed in a three-dimensional open system that cannot be approximated by a sparse graph (due to the non-trivial long-range coupling between scatterers). We will present several such examples here, one of which is shown in Figure~\ref{fig:introfig}.

\begin{figure}[tbh]
  \centering
  \label{fig:sketch}
  \includegraphics[width=\linewidth]{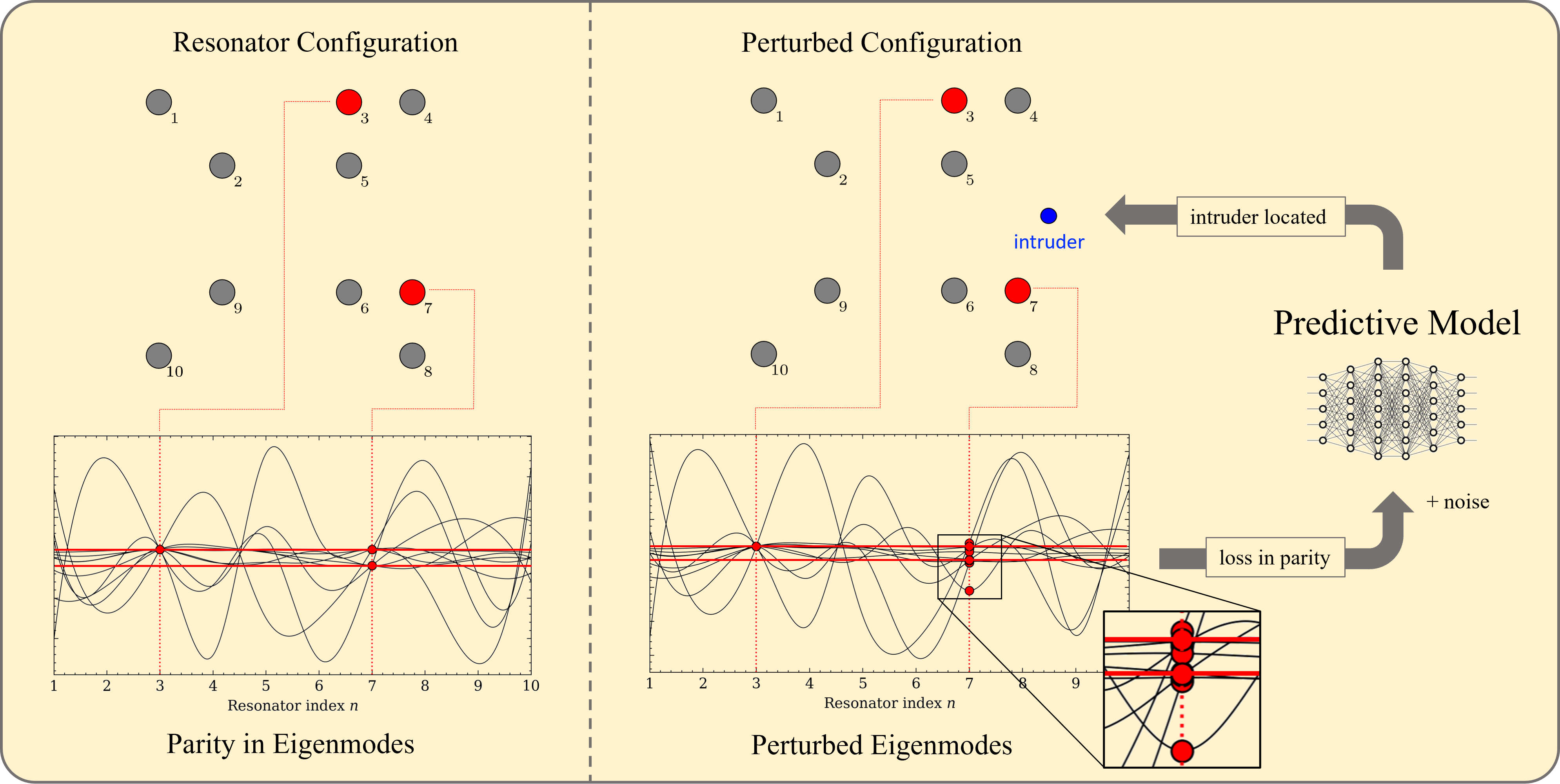}
  \caption{The resonator geometry shown on the left is such that the capacitance matrix supports a latent symmetry at sites (resonators) 3 and 7. The plot shows the discrete values of the eigenvectors at the 10 sites, interpolated using polynomial splines for illustrative purposes.  As a result of the latent symmetry, the eigenmodes display a parity symmetry at those positions, demonstrated through the two red lines that mark amplitude values $+1$ and $-1$.
  This latent symmetry is in spite of the system not having any geometric symmetries. In this study, we use such an array as a sensor by measuring how the parity of eigenmodes is broken by the presence of an intruder (right panel). We develop a predictive model that is capable of estimating the position and nature of the intruder from measurements of the broken parity, even in the presence of noise.}
  \label{fig:introfig}
\end{figure}

The idea explored in this work is that a latently symmetric system can be used as a sensor as it can detect any phenomena that break the symmetry of the system. For example, in Figure~\ref{fig:introfig} we show a resonator configuration for which the sites numbered 3 and 7 are latently symmetric. In the initial resonator configuration, on the left, we can see that the eigenmodes are all equal at these sites, up to parity (i.e. up to a change of sign). When an intruder is added, on the right in the figure, we see that the parity is broken; the modes shown are normalised so that the value on site 3 is the same as on the left-hand side of the figure, but the value on site 7 is no longer constrained to be $\pm$ this value.

There are several existing approaches to building sensors based on the principle of designing a resonator array with a specific property, then studying how these properties are perturbed by external phenomena. Many of these approaches use the shift of a particular eigenfrequency. Examples using whispering gallery modes have even been sufficiently sensitive that they can detect a single virus \cite{vollmer2008single}. Nevertheless, there are limits on how much the eigenvalues can change when a system is perturbed (see \cite{davies2022robustness} for estimates in the capacitance matrix setting). One solution to this has been exploiting non-Hermitian systems which can break these limits in the neighbourhood of exceptional points \cite{ammari2021high, cao2025exceptional, heiss2012physics, rechtsman2017optical}, however these are challenging to fabricate. In this work, we extract additional information from the resonator array by measuring richer information than just the shift of an eigenfrequency, by quantifying how much a given symmetry is broken. This will allow us not only to sense the presence of an intruder, but also to identify it and localize its position.

Latent symmetries provide an efficient way to extract symmetry information. Identifying a small number of latently symmetric pairs of sites gives a low-dimensional representation of the symmetries of the system, which we can then use to sense, identify and localize the intruder. We will show that this can be achieved straightforwardly using a dictionary-based approach, where the measured values (of how much each latent symmetry is broken) is compared to some known dictionary of intruder radii and positions. However, when significant noise is added to the data (to represent measurement errors) we see that Bayesian inference or an artificial neural network (multi-layer perceptron) perform better \cite{chen2020review}.

The paper is organized as follows. In \Cref{sec:latsyms} we will recall the relevant properties of latent symmetry and present examples of resonator configurations for which the capacitance matrix supports latent symmetries. In \Cref{ch: pertResSys} we will study the forward problem to understand how the introduction of an intruder breaks the latent symmetries of our arrays. Finally, in \Cref{sec:sensing} we will use this insight to develop sensing algorithms, that can identify and localize the position of the intruder.

\input{2_LatentSymmetry}

\input{3_Perturbation}

\input{4_Detection}

\input{5_Conclusions}

\section*{Acknowledgments} 
M.R. acknowledges discussions with M. Pyzh, C. V. Morfonios, P. Schmelcher, W. Gao, and F. Y. Zhang. 
M.R. acknowledges financial support in the framework of the ``Eastern Institute for Advanced Study Postdoctoral Excellent Programme''.

\section*{Author Contributions}
D.D.: Investigation, Development of software, Visualization of results, Writing of manuscript.
M.R.: Supervision.
B.D.: Conceptualization, Supervision, Writing of manuscript.

\bibliographystyle{siamplain}
\bibliography{references, Bibtex_Malte}


\end{document}

%% file: 2_LatentSymmetry.tex
\section{Problem formulation}
\label{sec:latsyms}

We will consider the capacitance matrix as a canonical model for a coupled resonator system. This was first introduced by Maxwell to relate the distributions of potential and charge in an array of conductors \cite{maxwell1873treatise}. The model was also used to characterise the low-frequency (subwavelength) coupled resonances of a collection of high-contrast acoustic resonators (such as air bubbles in water) \cite{ammari2024functional}. Suppose that the resonators occupy the disjoint domains $D_1, D_2, ...,D_N\subset\mathbb{R}^3$, and let $D=D_1\cup...\cup D_N$. Suppose further that $\partial D\in C^{1,s}$ with $s\in(0,1)$. The capacitance matrix can be defined in terms of boundary integral operators. The single layer potential $S_D:L^2(\partial D)\rightarrow H^1_\text{loc}(\R^3)$ is given by
\begin{equation}
    S_D[\phi](x) = -\frac{1}{4\pi}\int_{\partial D}\frac{1}{|x-y|}\phi(y) \, d\sigma(y),
    \label{eq: single layer formula}
\end{equation}
for $x\in\R^3$ and $\phi\in L^2(\partial D)$. The entries of the capacitance matrix are then defined as
\begin{equation}
    \mathcal C_{ij} = -\int_{\partial D_i}S_D^{-1}[\chi_{\partial D_j}] \, d\sigma
    \label{eq: capacitance entries}
\end{equation}
for $i,j\in\{1,2,...,N\}$, where $\chi_{\partial D_j}$ is the indicator function on the boundary of resonator $D_j$.

A convenient approximation is to consider a dilute regime for the system, where the resonators are small compared to the distances between them. We will also make the assumption that all the resonators are identical, as this simplifies the notation and will be sufficient for the present study. That is, we suppose that the resonators occupy the regions $D_i = \epsilon B + z_i$, where $B\subset\R^3$ is a fixed bounded simply connected domain and $z_i\in\R^3$. Here, we assume that $\epsilon$ is small, typically at least one order of magnitude smaller than the size of $B$. In this regime, the capacitance matrix \eqref{eq: capacitance entries} can be approximated by the dilute capacitance matrix \cite{ammari2024functional}. This dilute matrix is defined as
\begin{equation}
    C_{ij}= \left \{
    \begin{array}{ll}
        \epsilon \text{Cap}_{B} &  \text{if} \ i=j,\\[6pt]
        \displaystyle  -\epsilon^2\frac{(\text{Cap}_{B})^2}{4\pi|z_i-z_j|}  & \text{if} \ i\ne j,
    \end{array}
    \right.
    \label{eq: dilute capacitance matrix}
\end{equation}
where Cap$_{B}=-\int_{\partial B}S^{-1}_{B}[\chi_{\partial B}]\, d\sigma$, and it is called the capacitance of domain $B$. For example, the capacitance of a spherical resonator of radius $R$ is $4\pi R$. The dilute capacitance matrix is an approximation to the capacitance matrix in the sense that $\mathcal C_{ij}=C_{ij}+ \mathcal{O}(\epsilon^3)$ as $\epsilon\to0$ \cite[Lemma 4.3]{ammari2020topologicallyProtected}. Hence, it is a convenient way to approximate the eigenstates of $\mathcal C$, when the system is sufficiently dilute.

\subsection{Latent symmetries}

When a domain has a geometric symmetry, this is often reflected in the properties of associated physical systems. In our case, the eigenvectors of the capacitance matrix at the symmetric sites are the same up to parity (\emph{i.e.} they are the same up to factor of $\pm1$). The same is true for the solutions of the governing PDEs, when studying the full acoustic or electrostatic problems. This is often described more concisely by the commutation $[H,M]=0$, where $H$ is the Hamiltonian of the system and $M$ is the matrix describing the symmetry operation\footnote{The square brackets here denote the standard matrix commutator $[A,B]:=AB-BA$.}. However, this property can also hold at sites which are not geometrically symmetric. This is know as a \emph{latent symmetry}, a more general notion of symmetry \cite{smith2019hidden}.

Latent symmetries have been observed in a variety of different physical settings, including acoustic waveguides \cite{rontgen2023HiddenSym} and photonic networks \cite{himmelEigenmodesLatentSymmetricQuantum2025}. The principles of latent symmetry can be applied for any system that can be reduced to a discrete eigenvalue problem
\begin{equation} \label{eq:Hamilt}
    H{Y} = \lambda{Y}.
\end{equation}
If the system is asymmetric, then we have that $[H,M]\ne 0$ for any matrix $M$ describing a symmetry operation. However, it is possible that a subset of two (or more) sites may exhibit some symmetry; this is a latent symmetry. If we denote the set of sites that are candidates for being latently symmetric as $S$, then we can split all the sites into $S$ and its complement $\overline{S}$ and write the eigenvalue problem \eqref{eq:Hamilt} in the block matrix form
\begin{equation}
    \begin{pmatrix}
        H_{SS} & H_{S\overline{S}} \\
        H_{\overline{S}S} & H_{\overline{SS}}\\
    \end{pmatrix}
    \begin{pmatrix}
        Y_S \\
        Y_{\overline{S}}\\
    \end{pmatrix}
    = \lambda
\begin{pmatrix}
        Y_S \\
        Y_{\overline{S}}\\
    \end{pmatrix}.
    \label{eq: HamiltonianSystem}
\end{equation}
We would then like to formulate an eigenvalue problem posed only on the sites in $S$.
Eliminating $Y_{\overline{S}}$ from the above system of two coupled equations leads us to the \emph{isospectral reduction} over $S$, given by
\begin{equation}
    R_S[H](\lambda) = H_{SS} - H_{S\overline{S}}(H_{\overline{SS}}-\lambda I)^{-1}H_{\overline{S}S},
    \label{eq: effectiveHamiltonian}
\end{equation}
where we have implicitly assumed that $H_{\overline{SS}}-\lambda I$ is invertible. Then the eigenvalue problem reduces to $R_S[H](\lambda)Y_S = \lambda Y_S$ and a latent symmetry is present at the sites of $S$ if there exists a permutation matrix $M$ such that $[R_S[H](\lambda),M]=0$ for all $\lambda$. In our case, the capacitance matrix $C$ plays the role of the Hamiltonian, and to check whether a pair of resonators $D_i$ and $D_j$ are latently symmetric we follow the above procedure. That is, letting $S=\{i,j\}$, we check that
\begin{equation}
    [R_S[C](\lambda),M]=0 \quad \text{with} \ M = \begin{pmatrix}
        0 & 1 \\
        1 & 0
    \end{pmatrix}.
\end{equation}
However, this process is very computationally inefficient, and a better approach is to consider a method stemming from graph theory. 

We can think of our eigenvalue problem as a weighted graph where the resonators are the vertices, and the matrix entries define the weights of the edges. Then, the capacitance matrix is the associated adjacency matrix and latently symmetric sites correspond to cospectral vertices \cite{godsil2024stronglyCospectral}. Hence, an equivalent condition for checking latent symmetry is given through the matrix power identity \cite{Kempton2020LAIA594226CharacterizingCospectralVerticesIsospectral,Rontgen2021AQSymmetriesMatrixItsIsospectral} 
\begin{equation} \label{eq:exponentiation}
    \left( C^k \right)_{ii} = \left( C^k \right)_{jj} \quad \text{for all} \ k=1,...,N-1,
\end{equation}
where the superscript $k$ denotes matrix exponentiation. This characterisation of latent symmetry underlies the numerical implementations presented later, with the dilute capacitance matrix $C$ used as the model.

\subsection{Latently symmetric arrays} \label{sec:latsym_examples}
The starting point for this study is finding resonator arrangements for which the dilute capacitance matrix is latently symmetric. We approached this with a straightforward search algorithm. For simplicity, we consider spherical resonators that are all placed on a single plane ($z=0$). Our algorithm randomly placed up to ten resonators on points on the grid $(x,y)\in\{1,\dots,5\}\times \{1,\dots,5\}$. For each configuration, it used the exponentiation formula \eqref{eq:exponentiation} to see if the capacitance matrix had a latent symmetry.

Some of the configurations for which the capacitance matrix was found to have at least one latent symmetry are shown in Figure~\ref{fig:lat-sym-examples}. Unsurprisingly, many of the configurations were geometrically symmetric, such as those in Figures~\ref{fig:lat-sym-examples}(c) and~\ref{fig:lat-sym-examples}(d). Nevertheless, the fact these support \emph{multiple} pairs of latent symmetries will prove to be useful to us later. Excitingly, the search algorithm also uncovered configurations that are latently symmetric but do not have any classical geometric symmetries, such as those in Figures~\ref{fig:lat-sym-examples}(a) and~\ref{fig:lat-sym-examples}(b). 


\begin{figure}
  \centering
  \label{fig:symmetry examples}
  \includegraphics[width=0.4\linewidth]{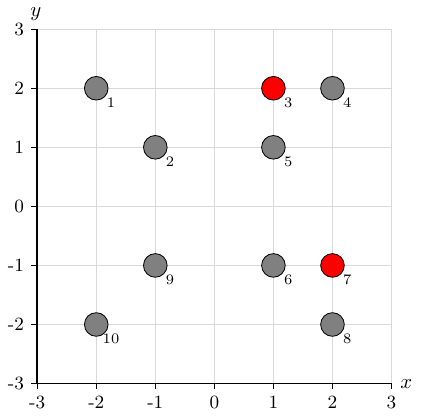}
  \includegraphics[width=0.4\linewidth]{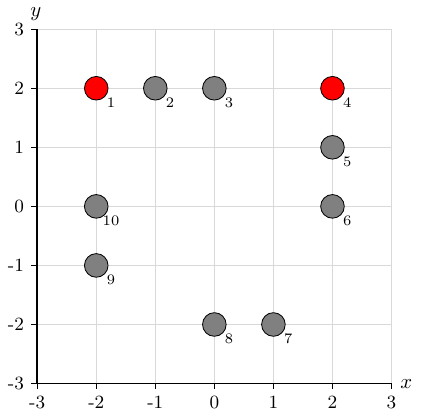}
  \makebox[0.4\linewidth][c]{\textbf{(a)}}
  \makebox[0.4\textwidth][c]{\textbf{(b)}} \\
  \includegraphics[width=0.4\linewidth]{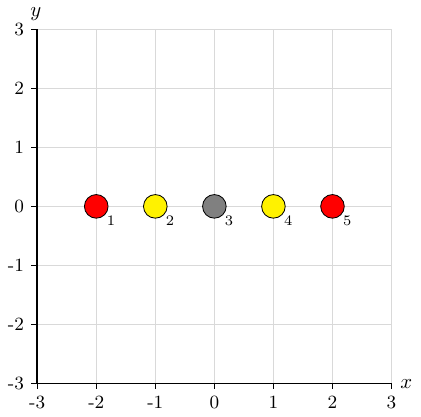}
  \includegraphics[width=0.4\linewidth]{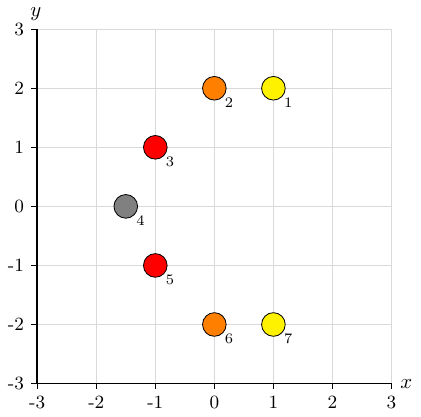}
  \makebox[0.4\linewidth][c]{\textbf{(c)}}
  \makebox[0.4\textwidth][c]{\textbf{(d)}} \\
  \caption{Examples of resonator arrays with latent and classical geometric symmetries. Plots (a) and (b) show arrays that are non-symmetric but have a pair of sites that are latently symmetric (these are sites 3 \& 7 in (a) and sites 1 \& 4 in (b)). Plots (c) and (d) show geometrically symmetric arrays which (trivially) have multiple pairs of latently symmetric sites.}
  \label{fig:lat-sym-examples}
\end{figure}

%% file: 3_Perturbation.tex
\section{Perturbations due to intruders} \label{ch: pertResSys}


In this section, we explore how the eigenvectors of the capacitance matrix are perturbed by the addition of a new ``intruder'' resonator. Suppose a new resonator appears in the system, occupying the region $\Omega = \epsilon \widetilde{B} + \tilde{z}$ for a bounded simply connected domain $\widetilde{B}\subset\R^3$ and $\tilde{z}\in\R^3$. We denote the perturbed system $\widetilde{D} = D\cup\Omega$, and we label the new resonator with index $N+1$. The new system can be described by an $(N+1)$-dimensional capacitance matrix. The new resonator means that a new column and row are added at the end of the matrix $C$ and the perturbed capacitance matrix has the block form
\begin{equation}
    \widetilde{C} = 
    \begin{bmatrix}
        C & A \\
        A^T & \tilde\alpha
    \end{bmatrix}, \quad \text{with} \ A_i=-\epsilon^2\frac{\text{Cap}_B\text{Cap}_{\widetilde{B}}}{4\pi|z_i - \tilde{z}|}, \ \tilde\alpha=\epsilon\text{Cap}_{\widetilde{B}}.
    \label{eq: perturbed dilute cap mat}
\end{equation}

We expect perturbations to break the symmetry in the system and for this to extend to the parity in the eigenvectors. Specifically, the absolute difference $||\tilde{v}_{kp}|-|\tilde{v}_{kq}||$ allows us to quantify the perturbation, where $\tilde{v}_{kj}$ is the $j$th element of the $k$th eigenvector. Since each eigenvector is perturbed by a varying degree, depending on the position of the intruder (and will be largely unaffected when it is close to a nodal point), we elect to consider an average over all of their contributions. In particular, we define the function
\begin{equation}
    F_{pq} = \frac{1}{N+1}\sum_{k=1}^{N+1}\Big|\left|\tilde{v}_{kp}\right| - \left|\tilde{v}_{kq}\right| \Big|,
    \label{def: F}
\end{equation}
which depends on the position and size of the intruder. To emphasise this, we will often write $F_{pq}=F_{pq}(r,x,y)$, where $r$ is the radius of the intruder and $(x,y)$ is its position on the plane $(x,y,0)\in\R^3$; since the physics governing the system is invariant to translations and rotations, we confine the arrangements to the $xy$\nobreakdash-plane near the origin. To avoid redundancy in notation, we will generally omit the last position coordinate for the centres of the spheres and write $z=(x,y)\in\R^2$ instead of $z=(x,y,0)\in\R^3$. A function $F_{pq}$ can be defined for each pair of symmetric resonators giving us a family of functions for a system with multiple latently symmetric pairs. We denote the family by 
\begin{equation}
    \boldsymbol{F} = \{F_{pq}:D_p \text{ and } D_q \text{ are symmetric}\}.
\end{equation}

\begin{remark}
    Note that $F_{pq} = F_{qp}$, so we do not gain any additional information by swapping the order of a symmetric pair of resonators.
\end{remark}

\begin{remark}
    The functions $F_{pq}$ are not just abstract quantities, but could be estimated relatively easily in a physical system. When a system has a latent symmetry at sites $p$ and $q$, if it is excited with the same amplitude at $p$ and $q$, then we expect it will continue to be symmetric at $p$ and $q$ as it evolves. Conversely, if the latent symmetry is broken, then $F_{pq}$ can be estimated by measuring the difference in amplitude between $p$ and $q$ as it evolves.
\end{remark}

The goal of the forward problem is to study and understand the surfaces these functions represent and to infer properties about their analytic forms. In particular, we will investigate how changes in size and position of the intruder $\Omega$ affect them. For simplicity, we shall restrict our analysis to collinear and planar arrangements of resonators. Furthermore, we will assume that all the resonators are spherical, with radius $R$ of those in the sensing array and the intruder having radius $r$.

Under these assumptions, our system can be written as $\widetilde{D}=D_1\cup...\cup D_N\cup\Omega$ with $D_i = \epsilon B + z_i$, for $z_i\in\R^2$ and $\Omega = \epsilon \widetilde{B} + \tilde{z}$, for some $\tilde z\in\R^2$. Letting $\alpha = \epsilon\text{Cap}_B$ and $\tilde{\alpha} = \epsilon\text{Cap}_{\widetilde{B}}$, the perturbed dilute capacitance matrix \eqref{eq: perturbed dilute cap mat} has the form
\begin{equation*}
    \widetilde{C} =
    \begin{bmatrix}
        \alpha & -\frac{\alpha^2}{4\pi|z_1-z_2|} & \dots & -\frac{\alpha^2}{4\pi|z_1-z_N|} & -\frac{\alpha\tilde{\alpha}}{4\pi|z_1-\tilde{z}|} \\
        -\frac{\alpha^2}{4\pi|z_2-z_1|} & \alpha & \dots & \vdots &  -\frac{\alpha\tilde{\alpha}}{4\pi|z_2-\tilde{z}|}\\
        \vdots & \vdots & \ddots & \vdots& \vdots \\
        -\frac{\alpha^2}{4\pi|z_N-z_1|} & \dots & \dots  & \alpha & -\frac{\alpha\tilde{\alpha}}{4\pi|z_N-\tilde{z}|} \\
        -\frac{\alpha\tilde{\alpha}}{4\pi|\tilde{z}-z_1|} & \dots & \dots & -\frac{\alpha\tilde{\alpha}}{4\pi|\tilde{z}-z_N|} & \tilde{\alpha}
    \end{bmatrix}.
\end{equation*}
The dependence of $\widetilde{C}$ on the intruder's radius and position is governed by $\tilde{\alpha}(r) = \epsilon4\pi r$ and $\tilde{z}=(x,y)$. This allows us to make the following propositions about the asymptotic behavior of the perturbed system in the limit of small or far away intruders.

\begin{proposition}
    Suppose $\Omega$ has a fixed position $\tilde{z}=(x_0,y_0)\in\R^2$. Then letting $r\rightarrow0$, the eigenvectors of $\widetilde{C}$ converge to $\boldsymbol{v}_k\oplus0$ and $\boldsymbol{0}_N\oplus 1$ , where $\boldsymbol{v}_k$ is an eigenvector of the unperturbed matrix $C$. Moreover, we have that $F_{pq}(r,x_0,y_0)\rightarrow0$ as $r\rightarrow0$.
    \label{prop: r tends to 0}
\end{proposition}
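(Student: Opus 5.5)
As $r\to0$ we have $\tilde\alpha=4\pi\epsilon r\to0$. So the plan is to view $\widetilde C$ as a small perturbation of a block-diagonal limit and to use continuity of eigenvectors. Write
\begin{equation*}
    \widetilde C(r) = \widetilde C_0 + E(r), \qquad \widetilde C_0 = \begin{bmatrix} C & 0\\ 0 & 0\end{bmatrix}, \qquad E(r)=\begin{bmatrix} 0 & A\\ A^T & \tilde\alpha\end{bmatrix}.
\end{equation*}
Since $A_i=-\alpha\tilde\alpha/(4\pi|z_i-\tilde z|)$ and $\tilde z$ is fixed away from the $z_i$, we get $\|E(r)\|=\mathcal O(r)$. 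The eigenpairs of $\widetilde C_0$ are $(\lambda_k,\boldsymbol v_k\oplus 0)$ for the eigenpairs $(\lambda_k,\boldsymbol v_k)$ of $C$, together with $(0,\boldsymbol 0_N\oplus 1)$. Everything is real symmetric, so standard perturbation theory for Hermitian matrices applies. Weyl's inequality gives convergence of the eigenvalues. The Davis--Kahan $\sin\Theta$ theorem gives convergence of the eigenvectors, up to sign, at rate $\mathcal O(r)/\mathrm{gap}$.

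The main obstacle is the spectral gap. I would first note that $0$ is not an eigenvalue of $C$: the dilute capacitance matrix is positive definite, being a dilute approximation of the positive definite capacitance matrix (or, for dilute configurations, by diagonal dominance). So the isolated eigenvalue $0$ of $\widetilde C_0$ is separated from the rest of the spectrum, and its eigenvector converges to $\boldsymbol 0_N\oplus1$. For the eigenvalues of $C$ the cases are as follows. If they are simple, each eigenvector converges individually. If $C$ has a repeated eigenvalue, Davis--Kahan only gives convergence of the eigenprojection onto the cluster. In that case I would interpret the statement as saying that the perturbed eigenvectors converge, along subsequences or after a choice of basis, to some orthonormal basis $\{\boldsymbol v_k\}$ of that eigenspace of $C$. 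By compactness of the unit sphere, every limit point is of the form $\boldsymbol v_k\oplus 0$ with $\boldsymbol v_k$ an eigenvector of $C$. This is enough for the second claim.

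For the second claim, take the limiting eigenvectors. If $\boldsymbol v_k$ is an eigenvector of $C$, then latent symmetry of $p,q$ gives $|v_{kp}|=|v_{kq}|$. This is the parity property of eigenvectors at cospectral vertices recalled in Section~\ref{sec:latsyms}. In the degenerate case I would use eigenvectors chosen according to the latent symmetry, or the fact that the cospectral condition \eqref{eq:exponentiation} forces the projections onto each eigenspace to satisfy $(P_\lambda)_{pp}=(P_\lambda)_{qq}$, which is what is needed for the parity statement. For the extra limiting eigenvector $\boldsymbol 0_N\oplus1$, both entries $p$ and $q$ (with $p,q\le N$) vanish. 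Thus every summand $\big||\tilde v_{kp}|-|\tilde v_{kq}|\big|$ in \eqref{def: F} has limit $0$. Indeed, $x\mapsto||x_p|-|x_q||$ is continuous and invariant under $x\mapsto -x$, so the sign ambiguity is harmless. Summing the $N+1$ terms gives $F_{pq}(r,x_0,y_0)\to0$, and in the simple-spectrum case the rate is $\mathcal O(r)$.
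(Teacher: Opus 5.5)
Your overall route is the paper's. The paper also shows $\|\widetilde C-C\oplus 0\|_F=\mathcal O(\tilde\alpha)$, reads off the eigenvectors $\boldsymbol v_k\oplus0$ and $\boldsymbol 0_N\oplus1$ of the limit matrix, uses positive definiteness of $C$ to make the eigenvalue $0$ simple, and concludes from the parity of the $\boldsymbol v_k$. Your Weyl/Davis--Kahan step, together with the observation that $x\mapsto\big||x_p|-|x_q|\big|$ is continuous and sign-invariant, supplies the continuity argument the paper leaves implicit. When $C$ has simple spectrum, your proof is complete.

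The degenerate-case patch, however, does not work, and no patch can. You do not get to choose the limiting basis of a multiple eigenspace $E_\lambda$ of $C$; the perturbation chooses it. Because the top-left block of $E(r)$ vanishes, any $w\in E_\lambda$ with $A^Tw=0$ gives an \emph{exact} eigenvector $w\oplus0$ of $\widetilde C(r)$ for every $r$. The remaining vector of the cluster tends to $P_\lambda A/\|P_\lambda A\|\oplus0$. Since the direction of $A$ is independent of $r$, these limits are fixed by $\tilde z$. Nor does $(P_\lambda)_{pp}=(P_\lambda)_{qq}$ give parity: it controls only $\sum_k|v_{kp}|^2$ over the cluster. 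It guarantees that \emph{some} orthonormal basis of $E_\lambda$ has parity at $p,q$, not the one selected by the intruder.

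Concretely, put four identical resonators at $z_1=(0,0)$, $z_2=(1,0)$, $z_3=(1,1)$, $z_4=(0,1)$. Sites $1,2$ are swapped by $x\mapsto1-x$, so they satisfy \eqref{eq:exponentiation}. But $C$ has the double eigenvalue $\alpha+\alpha^2/(4\sqrt2\pi)$, with eigenspace spanned by $(1,0,-1,0)$ and $(0,1,0,-1)$. The vector $w=(A_2-A_4,\,A_3-A_1,\,A_4-A_2,\,A_1-A_3)$ lies in this eigenspace and satisfies $A^Tw=0$. For generic $\tilde z$ (e.g.\ $\tilde z=(2,0.3)$) one has $|w_1|\ne|w_2|$, and this eigenvalue is then simple for $\widetilde C(r)$. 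Hence $F_{12}(r,\tilde z)\ge\frac15\big||w_1|-|w_2|\big|/\|w\|>0$ for every $r>0$.

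So the statement needs an extra hypothesis: either $C$ has simple spectrum, or $p,q$ are strongly cospectral ($P_\lambda e_p=\pm P_\lambda e_q$ for every $\lambda$). This is what the paper tacitly uses when it says the $\boldsymbol v_k$ ``retain their parity''. With that hypothesis your argument goes through as written; without it, drop the degenerate case rather than trying to patch it.
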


\begin{proof}
    The key here is that each entry in the last row or column of the perturbed matrix contains a factor of $\tilde{\alpha}(r) = \epsilon4\pi r$, which converges to zero as $r\rightarrow0$. Keeping in mind that the denominators in the coefficients of the last row and column are constant, since $\tilde{z}=(x_0,y_0)$ is fixed, we have that
    \begin{equation*}
        \left\lVert \widetilde{C}-C\oplus0_{1\times 1} \right\rVert^2_F = 2\sum_{i=1}^N \left(\frac{\alpha\tilde{\alpha}}{4\pi|z_i-\tilde z|}\right)^2 + \tilde{\alpha}^2.
    \end{equation*}
    Thus, 
    \begin{equation*}
        \lim_{r\rightarrow0}\widetilde{C}(r) =
        \begin{bmatrix}
            C  & \boldsymbol{0}_{N\times1} \\
            \boldsymbol{0}_{1\times N} & 0
        \end{bmatrix}.
    \end{equation*}
    The eigenvalues of this block matrix are $\lambda_1,...,\lambda_N,$ and $0$, where the $\lambda_k$ are the eigenvalues of the unperturbed matrix $C $. Thus, if $\boldsymbol{v}_k$ is an eigenvector of $C$ associated to $\lambda_k$, then the block vectors $\boldsymbol{v}_k\oplus0$ are eigenvectors of $\widetilde{C}$. For the eigenvector associated to the eigenvalue $0$, we need to consider the fact that $C$ is a symmetric positive definite matrix. Specifically, it has a full rank, and so the null space of $\widetilde{C}$ is one-dimensional with basis $\boldsymbol{0}_N\oplus1$. Lastly, we note that $F_{pq}\equiv0$ for the block matrix above since the eigenvectors $\boldsymbol{v}_k$ retain their parity and all the coefficients in $\boldsymbol{0}_N\oplus1$ associated to the symmetric resonators are zero.
\end{proof}

The significance of the above result is that the system will struggle to detect very small intruders. We also cannot have $r$ be too large as then the diluteness assumption in the physical model may be violated. Hence, it is desirable to consider detection systems where the intruder's radius $r$ is assumed to be within one order of magnitude from the resonators' radii $R$.

\begin{proposition}
    Suppose $\Omega$ has a fixed radius $r_0>0$. Then letting $|\tilde{z}|\rightarrow\infty$, the eigenvectors of $\widetilde{C}$ converge to $\boldsymbol{v}_k\oplus0$ and $\boldsymbol{0}_N\oplus1$, where $\boldsymbol{v}_k$ is an eigenvector of $C$. Moreover, $F_{pq}(r_0,x,y)\rightarrow0$ as $|\tilde{z}=(x,y)|\rightarrow\infty$.
    \label{prop: |(x,y)| tends to inf}
\end{proposition}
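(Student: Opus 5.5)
The plan mirrors the proof of Proposition~\ref{prop: r tends to 0}, with the limiting matrix changed. Now $r_0$ is fixed, so $\tilde\alpha=\epsilon 4\pi r_0>0$ stays constant. Only the coupling entries $A_i=-\alpha\tilde\alpha/(4\pi|z_i-\tilde z|)$ depend on the position of the intruder. Since the $z_i$ are fixed, $|z_i-\tilde z|\ge |\tilde z|-\max_i|z_i|\to\infty$, and hence
\begin{equation*}
    \left\lVert \widetilde{C}-C\oplus\tilde\alpha \right\rVert^2_F = 2\sum_{i=1}^N \left(\frac{\alpha\tilde{\alpha}}{4\pi|z_i-\tilde z|}\right)^2 \longrightarrow 0 \quad \text{as } |\tilde z|\to\infty .
\end{equation*}
So $\widetilde C$ converges to the block-diagonal matrix $C\oplus\tilde\alpha$, not to $C\oplus 0$ as before.

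The eigenpairs of the limit are read off directly. They are $(\lambda_k,\boldsymbol v_k\oplus 0)$ for $k=1,\dots,N$, together with $(\tilde\alpha,\boldsymbol 0_N\oplus 1)$. This gives the claimed limiting eigenvectors.

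To pass from convergence of matrices to convergence of eigenvectors, I will use the fact that $\widetilde C$ is real symmetric. Eigenvalues then depend continuously on the entries (Weyl's inequality: $|\tilde\lambda_k-\lambda_k|\le\lVert\widetilde C-C\oplus\tilde\alpha\rVert_2$). Normalised eigenvectors attached to simple eigenvalues also depend continuously, up to sign, by the Davis--Kahan $\sin\Theta$ theorem, with an error controlled by $\lVert A\rVert$ divided by the spectral gap.

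\textbf{Main obstacle: degeneracies.} Continuity of individual eigenvectors needs the limiting spectrum $\{\lambda_1,\dots,\lambda_N,\tilde\alpha\}$ to be simple. This can fail in two ways.
\begin{itemize}
    \item $\tilde\alpha$ coincides with some $\lambda_k$. In that case I would note that the statement holds in the sense of the eigenprojection onto the corresponding eigenspace.
    \item $C$ itself has repeated eigenvalues. Latently symmetric (and especially geometrically symmetric) arrays may well have these, so this case cannot be dismissed.
\end{itemize}
In the degenerate case I would argue instead that, for each $k$, $\tilde{\boldsymbol v}_k$ lies within $O(\lVert A\rVert/\text{gap})$ of the limiting eigenspace. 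I would then choose $\boldsymbol v_k$ in that eigenspace as the limit point along subsequences. For $F_{pq}$ to vanish in the limit, every vector in that eigenspace must have parity at $p,q$. This holds for latent symmetry: the cospectral condition \eqref{eq:exponentiation} makes the projections $P_\lambda$ onto the eigenspaces satisfy $(P_\lambda)_{pp}=(P_\lambda)_{qq}$. I would state the result under the simplicity assumption and remark on this extension.

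With convergence of eigenvectors in hand, the claim about $F_{pq}$ follows from continuity. The quantity $F_{pq}$ in \eqref{def: F} depends on the eigenvectors only through absolute values of their components, so it is insensitive to the sign ambiguity of normalised eigenvectors. It is a continuous function of those components. Its value at the limiting eigenvectors is zero:
\begin{itemize}
    \item each $\boldsymbol v_k\oplus0$ satisfies $|v_{kp}|=|v_{kq}|$ by the latent symmetry of $p,q$;
    \item $\boldsymbol 0_N\oplus 1$ has zero entries at both $p$ and $q$, since $p,q\le N$.
\end{itemize}
Hence $F_{pq}(r_0,x,y)\to0$ as $|(x,y)|\to\infty$.
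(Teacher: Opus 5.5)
Under a simple-spectrum hypothesis your argument is the paper's argument. The Frobenius estimate gives $\widetilde C\to C\oplus\tilde\alpha$, the limiting eigenpairs are read off, and parity at $p,q$ makes $F_{pq}$ vanish in the limit. Your Weyl/Davis--Kahan step, together with the observation that $F_{pq}$ depends only on $|\tilde v_{kp}|,|\tilde v_{kq}|$ and is a symmetric sum over $k$, supplies the passage from convergence of matrices to convergence of eigenvectors that the paper takes for granted. The paper tacitly assumes simplicity as well.

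The remark you plan to add about degenerate spectra is false, however, and should be dropped. Cospectrality gives only $(P_\lambda)_{pp}=(P_\lambda)_{qq}$, i.e.\ $\lVert P_\lambda e_p\rVert=\lVert P_\lambda e_q\rVert$. For \emph{every} vector of $E_\lambda$ to have parity at $p,q$ you need $P_\lambda e_p=\pm P_\lambda e_q$ (strong cospectrality), which is strictly stronger when $\dim E_\lambda\ge 2$. Cospectrality only guarantees that \emph{some} orthonormal basis of $E_\lambda$ has parity. You cannot choose the limit points, because they are selected by $A$: any $w\in E_\lambda$ with $w\perp A$ gives an exact eigenvector $w\oplus 0$ of $\widetilde C$.

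Here is a concrete counterexample. Take four resonators at $z_1=(\tfrac{1}{2},\tfrac{1}{2})$, $z_2=(-\tfrac{1}{2},\tfrac{1}{2})$, $z_3=(-\tfrac{1}{2},-\tfrac{1}{2})$, $z_4=(\tfrac{1}{2},-\tfrac{1}{2})$. Sites $1,2$ are swapped by $x\mapsto -x$, so they are cospectral. The circulant $C$ has the double eigenvalue $\lambda=\alpha+\alpha^2/(4\sqrt2\,\pi)$, with eigenspace spanned by $(1,0,-1,0)$ and $(0,1,0,-1)$. Then $w=(A_2-A_4)(1,0,-1,0)-(A_1-A_3)(0,1,0,-1)$ is orthogonal to $A$. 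Moreover, $\lambda$ is a simple eigenvalue of $\widetilde C$ whenever $P_\lambda A\neq 0$, and the whole spectrum is simple for large $|\tilde z|$ if $\tilde\alpha\notin\mathrm{spec}(C)$, so $F_{12}$ is well defined.

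Now send $\tilde z=t(\cos\theta,\sin\theta)$ with $t\to\infty$ and write $c=\cos\theta$, $s=\sin\theta$. The expansion $A_i-A_j\approx-\frac{\alpha\tilde\alpha}{4\pi t^2}(z_i-z_j)\cdot(c,s)$ shows that the normalised $w$ tends to $\pm\tfrac{1}{2}(s-c,\,-(s+c),\,c-s,\,s+c)$. Its entries at sites $1,2$ have equal modulus only if $sc=0$, so $F_{12}\not\to 0$ along generic directions. Hence the simplicity hypothesis (or strong cospectrality of $p,q$) is genuinely needed for the proposition, not merely a technical convenience.

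Your other degenerate case, $\tilde\alpha=\lambda_k$ with $\lambda_k$ simple, is harmless for $F_{pq}$. Every vector in $\mathrm{span}\{\boldsymbol v_k\oplus 0,\boldsymbol 0_N\oplus 1\}$ has parity at $p,q$, even though the individual eigenvectors then converge to mixtures rather than to $\boldsymbol v_k\oplus0$ and $\boldsymbol 0_N\oplus1$.
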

\begin{proof}
    The proof is similar to the argument for Proposition \ref{prop: r tends to 0}. The slight difference is that now $\tilde{\alpha}=\epsilon4\pi r_0$ is fixed, and the denominators in the coefficients of the last row and column of $\widetilde{C}$ vary. Considering the difference between the perturbed matrix and $C\oplus\tilde{\alpha}$ in the Frobenius norm, 
     \begin{equation*}
        \left\lVert \widetilde{C}-C\oplus\tilde{\alpha} \right\rVert^2_F = 2\sum_{i=1}^N \left(\frac{\alpha\tilde{\alpha}}{4\pi|z_i-\tilde z|}\right)^2,
    \end{equation*}
    we get that
    \begin{equation*}
        \lim_{|\tilde{z}|\rightarrow\infty} \widetilde{C} (\tilde{z}) =
        \begin{bmatrix}
            C  & \boldsymbol{0}_{N\times1} \\
            \boldsymbol{0}_{1\times N} & \tilde{\alpha}
        \end{bmatrix}.
    \end{equation*}
    Using the same reasoning as before, we find that the eigenvectors of this block matrix are $\boldsymbol{v}_k\oplus0$ and $\boldsymbol{0}_N\oplus1$. The only difference is that the last eigenvalue is $\tilde{\alpha}$ instead of $0$. Similarly, $F_{pq}\equiv0$ because of parity in the symmetric entries of all the eigenvectors.
\end{proof}

\begin{remark}
    We showed convergence along the radial and positional dimensions individually, but convergence also holds when they are both varied simultaneously. It is easy to see that $\lim_{r\rightarrow0}\lim_{|\tilde{z}|\rightarrow\infty}\widetilde{C} (r,\tilde{z})$ and  $\lim_{|\tilde{z}|\rightarrow\infty}\lim_{r\rightarrow0}\widetilde{C} (r,\tilde{z})$ are both equal to zero, and this can be shown along any path along which $|\tilde{z}|$ grows large while $r$ shrinks to zero.. Additionally, the above results hold for three-dimensional arrangements, where we have $\tilde{z}\in\R^3$ not fixed to a planar subspace.
\end{remark}

The consequence of Proposition~\ref{prop: |(x,y)| tends to inf} is that we will need to consider intruders $\Omega$ which are not too far from or too close to the system of resonators. Similarly to the case for small radius, it will be difficult to detect intruders far away from the symmetric system. Additionally, being too close to one of the resonators violates the diluteness assumption in the model.

This approach to sensing is different to the conventional far-field sensing in inverse scattering problems. The requirement for the intruder to be close to the resonator array has advantages for applications in very small scale environments but also presents challenges due to the complex scattering and interference taking place near to the resonators. We will observe this in the numerical simulations that follow.

\subsection{Varying radius}\label{sec: varying radius}

Let us consider the example from Figure~\ref{fig:lat-sym-examples}(c) again, with five collinear resonators, but this time we perturb the system by adding a new resonator at position $\tilde{z}=(3,0)$. These correspond to $D=D_1\cup...\cup D_5$ and $\Omega$, respectively, as depicted in Figure~\ref{fig: 5 chain preturbed}. The resonators all have radius 0.1 and their centres are separated by a distance 1. The idea is to vary the radius of $\Omega$, $r_\Omega=\epsilon r$ (by varying $r$) and observe its effect on the functions $F_{15}$ and $F_{24}$ associated to the symmetric pairs $\{D_1,D_5\}$ and $\{D_2,D_4\}$, respectively. We vary $r$ according to a uniform discretisation of the interval $[0,1]$, avoiding going any larger so as to be consistent with the diluteness assumption.

\begin{figure}
\centering
\begin{tikzpicture}[scale=2] 
  \foreach \i in {1,2,3,4,5} {
    \filldraw[fill=gray!40, draw=black] (\i,0) circle (0.15); 
    \node[above] at (\i,0.2) {\small $D_\i$};
  }
  \filldraw[fill=black, draw=black] (6,0) circle (0.075);
  \node[above] at (6,0.2) {\small $\Omega$};
\end{tikzpicture}
\caption{Five spherical resonators of radius $0.1$ positioned at the coordinates $z_i \in \{-2,-1,0,1,2\}\times\{0\}$ and an intruder $\Omega$ of variable radius at $(3,0)$. The symmetric pairs of the unperturbed system are $\{D_1, D_5\}$ and $\{D_2, D_4\}$.}
\label{fig: 5 chain preturbed}
\end{figure}

The numerical simulation presented in Figure~\ref{fig: radius1} shows that as $r$ approaches zero so do $F_{15}(r,3,0)$ and $F_{24}(r,3,0)$. This is consistent with the mathematical analysis of Proposition~\ref{prop: r tends to 0}. In the opposite direction where $r$ approaches $1$, that is $r_\Omega$ approaches $0.1$, we see that both functions grow smoothly up to around $r_\Omega=0.08$ for $F_{15}$ and $r_\Omega=0.09$ for $F_{24}$, after which, slightly more volatile trajectories are observed, characterised by a dip followed by another increase. Perhaps, one reason for this could be that the waves scattered off the intruder interfere with the symmetric resonators closer to it, namely $D_5$ and $D_4$. Since $D_5$ is closer, it is coupled with the intruder more strongly and experiences the interference effect first. Looking at the decay of the off-diagonal capacitance coefficients \eqref{eq: dilute capacitance matrix}, we can see that the coupling strength scales in proportion to the reciprocal of the distance between the two objects. The log-log plot on the right in Figure~\ref{fig: radius1} demonstrates linear behavior for small radius. Between $r_\Omega=10^{-3}$ and $r_\Omega=10^{-2}$, the slope is effectively a straight line which is not too surprising given that the functions $F_{pq}$ are governed by eigenvector perturbations which are in turn asymptotically linear for small perturbations.

\begin{figure}
\centering
\includegraphics[width=0.8\textwidth]{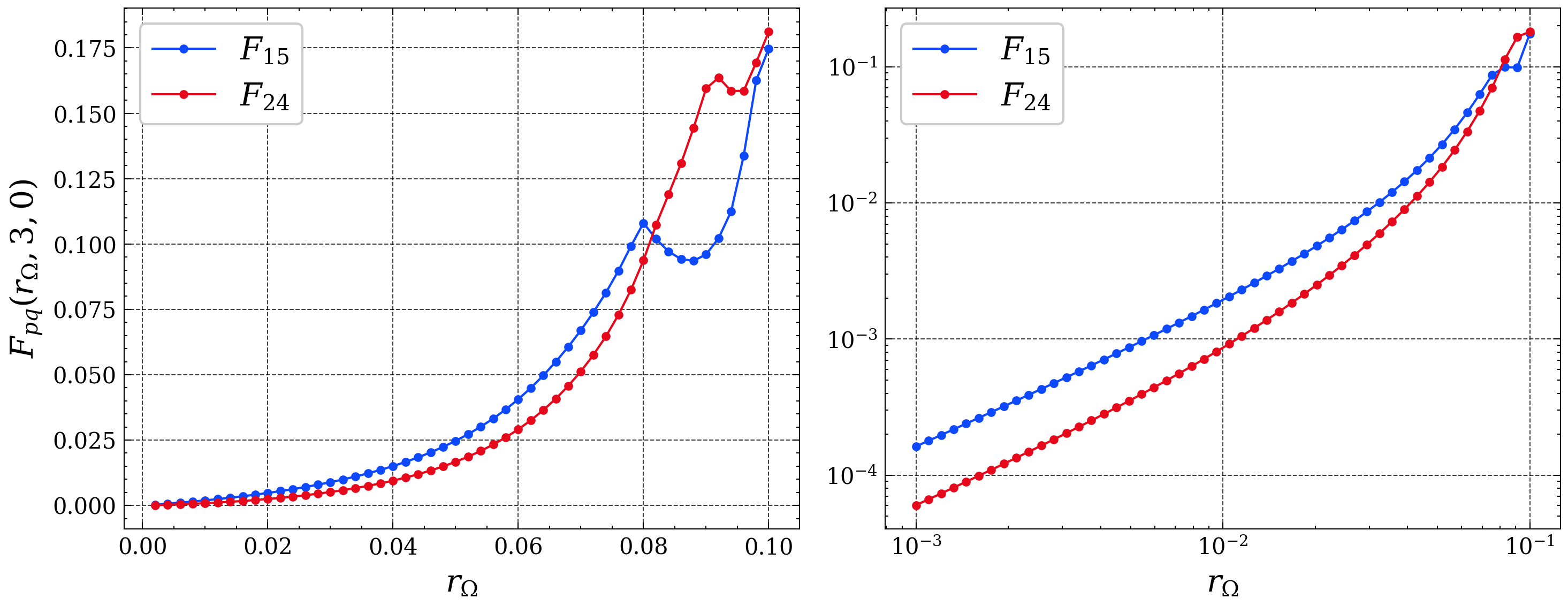}
\caption{Perturbations of system $D$ by an intruder $\Omega$ of variable radius fixed at position $(3,0)$. The size of $\Omega$ of the intruder does not exceed that of the five resonators of the original system $D$. The plot on the right is in log-log scale.}
\label{fig: radius1}
\end{figure}

We have observed the behaviour of $F_{15}$ and $F_{24}$ at the fixed position $(3,0)$. The question now is whether this behaviour can also be observed at other positions. We will demonstrate later that the functions in the family $\boldsymbol{F}$ are all symmetric in the position variable $\tilde{z}=(x,y)\in\R^2$ with respect to the mirror symmetries of the system, but for now, let us just assume so. The five resonators have two lines of symmetry - the $x$\nobreakdash-axis and $y$\nobreakdash-axis. Hence, it is sufficient to restrict our attention to just the upper right quadrant. To answer the question, we consider a sample of $200$ positions chosen uniformly random over the domain $[0,5]\times[0,3]$, and a curve in $r$ is plotted for each one of them. The simulated curves are shown in Figure~\ref{fig: radpos1}, colour-coded according to their respective positions with blue marking samples nearer the system of resonators and red marking samples further away.

Two important observations can be made from this simulation. Firstly, we see that the blue curves appear predominantly above the red curves, indicating that intruders further away from the resonators have lesser effect on the functions in $\boldsymbol{F}$. This is consistent with Proposition~\ref{prop: |(x,y)| tends to inf}. Secondly, and more interestingly, the curves appear to be parallel to each other. As the plot is in log-log scale, this implies that they are scalar multiples of each other with the scalar depending on the position of the intruder. This suggests the functions quantifying the effect of the perturbation are separable into $F_{pq}(r,x,y) = H_{pq}(r)G_{pq}(x,y)$.

\begin{figure}
\centering
\includegraphics[width=0.66\textwidth, trim=0cm 10.2cm 0cm 0cm, clip]{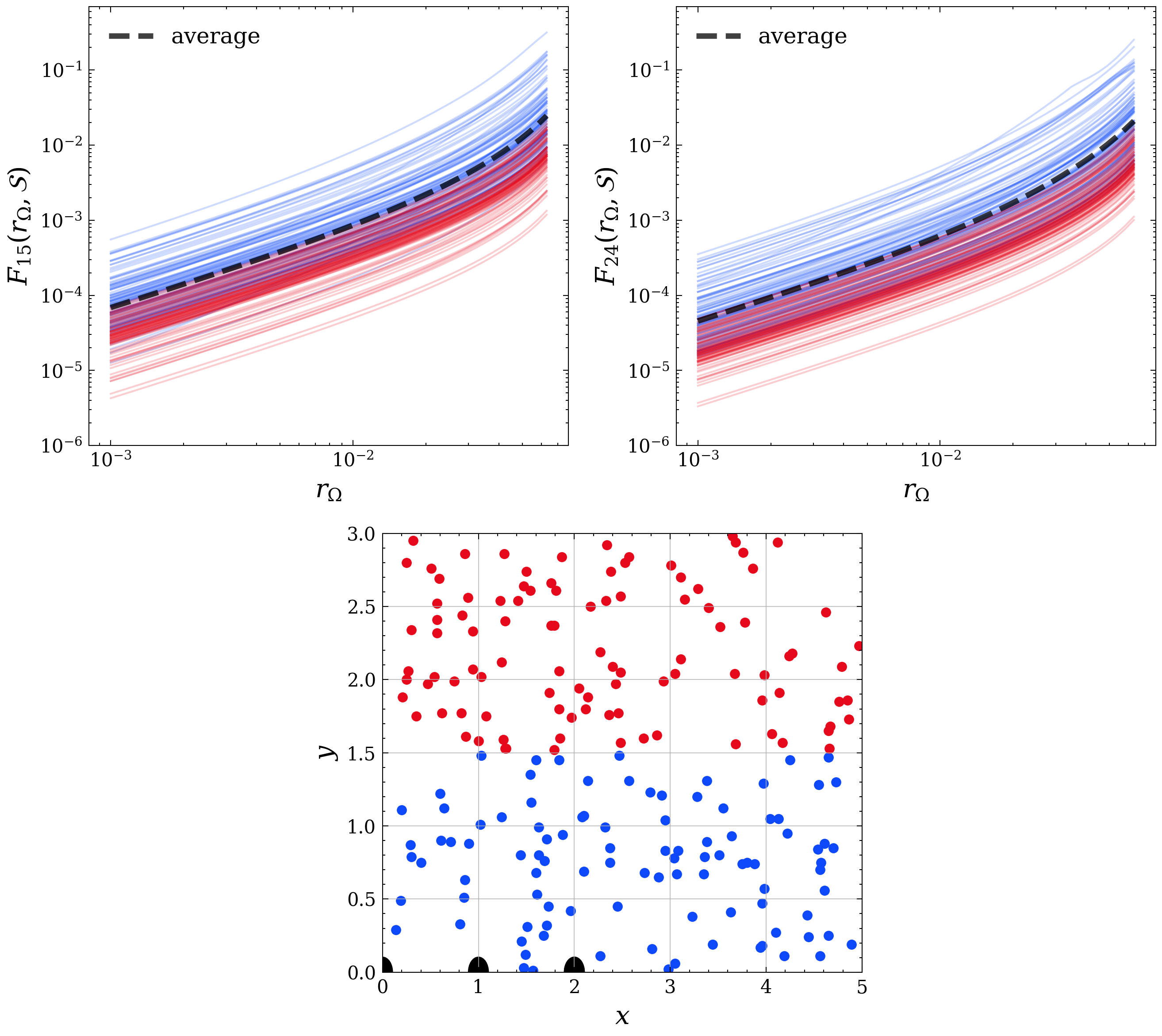}
\includegraphics[width=0.33\textwidth, trim=6cm 0cm 5cm 10cm, clip]{radius_many.png}
\caption{Perturbations of system $D$ by an intruder of variable radius sampled at $200$ locations. In the bottom plot, the sample points $\mathcal{S}$ are chosen uniformly at random over the rectangular domain $[0,5]\times[0,3]$ and colour coded according to their proximity to the resonators $D_3,D_4,D_5$ which are partially shown in black. The plots on top display the associated curves in $r_\Omega$ for the functions $F_{15}$ and $F_{24}$ with respect to $\mathcal{S}$.}
\label{fig: radpos1}
\end{figure}

\subsection{Varying position}\label{sec: varying position}

We now investigate the effect the position variable $\tilde{z}=(x,y)\in\R^2$ has on the function $F_{pq}$. First, let us fix the radius of the intruder to be the same as the resonators used for sensing, that is $r_\Omega=0.1$. The resulting function describes a three-dimensional surface parametrised by $(x,y,F^{0.1}_{pq}(x,y))$, where the third coordinate is the restriction $F^{0.1}_{pq}(x,y) := F_{pq}(0.1,x,y)$. To simulate this surface, we discretise a rectangular domain surrounding the resonators into a uniform grid of points. We use equal spacing along both axes, $\Delta x = \Delta y=0.01$. At each point we simulate the perturbed capacitance matrix and calculate its eigenvectors. We record how much the expected parity is broken via $F_{pq}^{0.1}$. The heatmaps for the surfaces $F_{15}^{0.1}$ and $F_{24}^{0.1}$ are shown in Figure~\ref{fig: positions1}. We have avoided overlapping with the resonators. This is indicated by the white annuli of radius $r_\Omega$ surrounding each resonator.

\begin{figure}
\centering
\begin{tikzpicture}
\node[inner sep=0pt] (a) at (-3,0)
    {\includegraphics[width=0.49\textwidth,trim=0 0 8.75cm 0cm,clip]{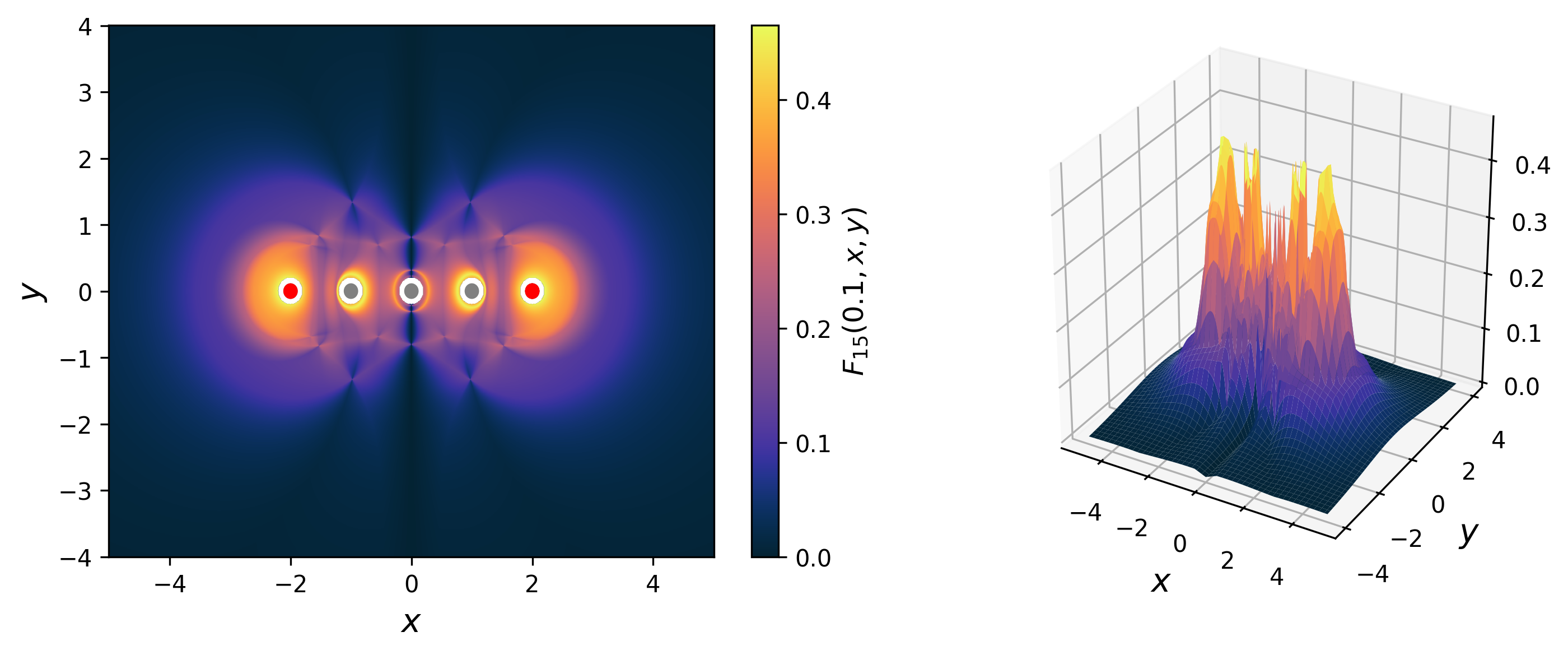}};
\node[inner sep=0pt] (b) at (3.5,0)
    {\includegraphics[width=0.45\textwidth,trim=0 0 10cm 0,clip]{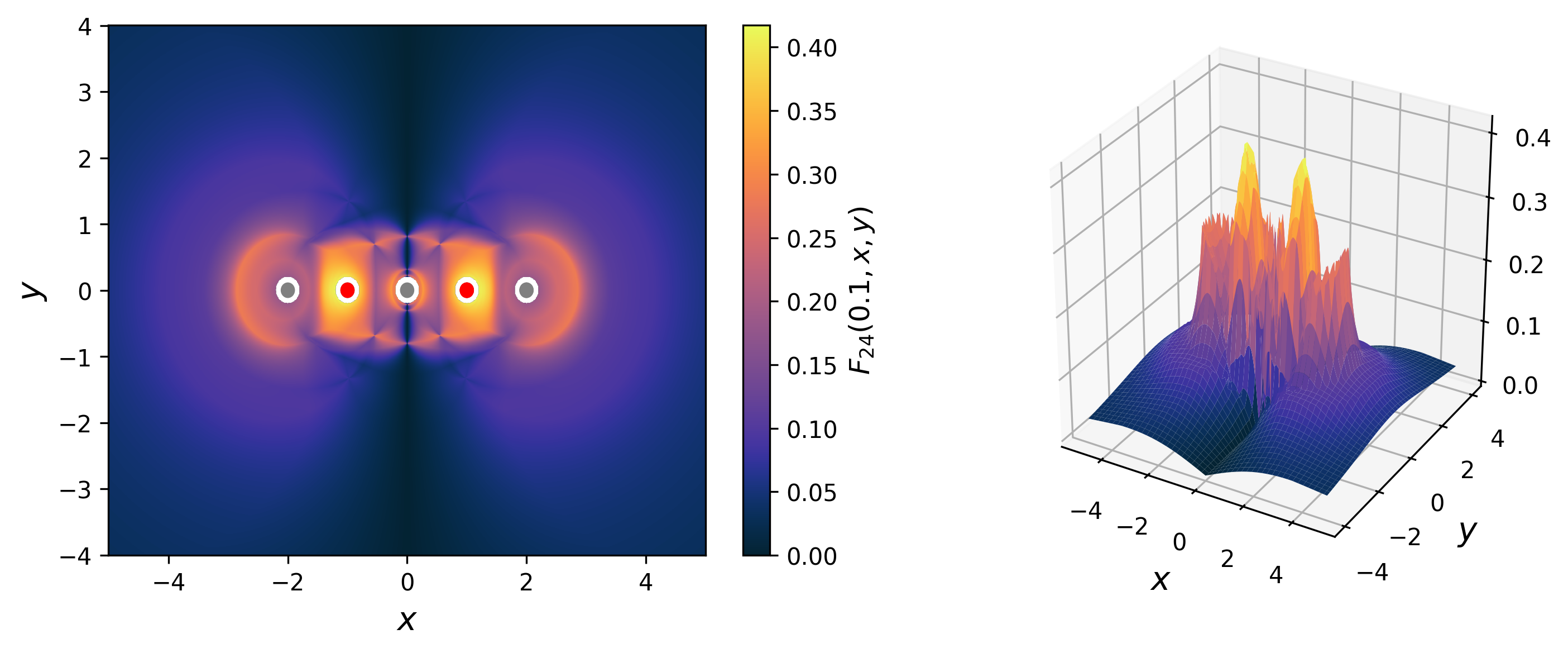}};
\node[scale=0.8,white] at (-5.2,2) {(a)};
\node[scale=0.8,white] at (1.3,2) {(b)};
\end{tikzpicture}
\caption{Perturbations of system $D$ by an intruder of fixed radius but variable position. (a) the heatmap associated with the function $F_{15}^{0.1}$ are presented. (b) the heatmap associated with $F_{24}^{0.1}$. The symmetric pair of resonators used for sensing is marked in red, and the remaining resonators in gray. The white annuli surrounding the resonators represent regions where an intruder cannot exist as it would overlap the resonators. }
\label{fig: positions1}
\end{figure}

There are a few interesting observations that can be made. Firstly, the heatmaps are symmetric along the lines of symmetry - the $x$\nobreakdash-axis and the $y$\nobreakdash-axis. This means that the inverse mapping would be multivalued with each input having at least four different outputs. One way to overcome this problem is to restrict to one of the four quadrants like we did in the previous section. Note further that if an intruder is located along the line $y=0$, it does not break the symmetry, meaning that $F_{pq}^{0.1}(x,0) = 0$ for all $x\in\R$, and we will not be able to detect it.

Secondly, we observe that as we move away from the resonators, the sensitivity decreases quickly, consistent with Proposition~\ref{prop: |(x,y)| tends to inf}. We can reinforce this observation by checking that far away $F^{0.1}_{pq}$ does indeed tend to zero. To do this, we choose two paths along which to simulate the asymptotic behaviour. We first fix $x_0=0.5$ and vary $y\in\R$ after which we swap their places and fix $y_0=0.5$ while varying $x\in\R$. Figure~\ref{fig: asymp_x_y} shows this. Interestingly, and perhaps not as obvious in the surface plots, the pair $\{D_2,D_4\}$ appears to be able to sense significantly further along the $x$\nobreakdash-axis than $\{D_1,D_5\}$.

\begin{figure}
\centering
\begin{tikzpicture}
\node[inner sep=0pt] (a) at (0,0)
    {\includegraphics[width=0.8\textwidth]{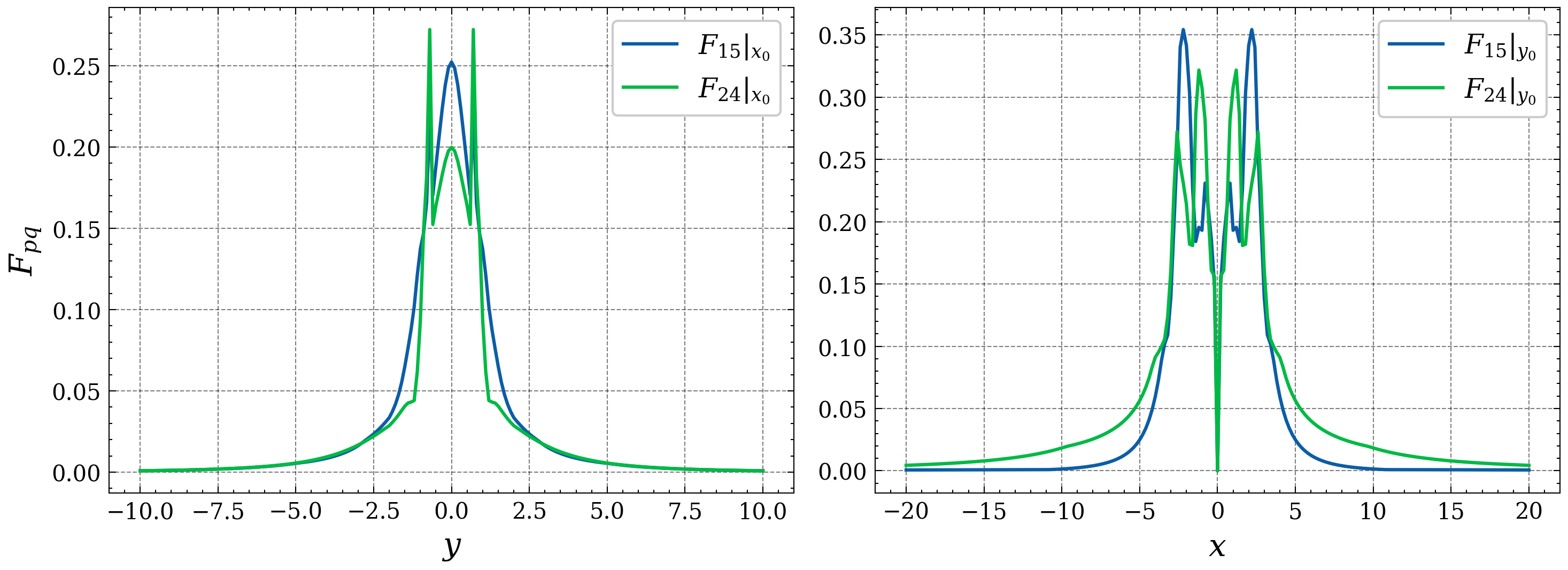}};
\node[scale=0.8] at (-5,1.8) {(a)};
\node[scale=0.8] at (1.1,1.8) {(b)};
\end{tikzpicture}
\caption{Far-field behaviour of the functions $F_{15}$ and $F_{24}$. (a) a path along $\{x_0\}\times[-10,10]$ with $x_0=0.5$ is simulated showing the asymptotic behaviour along the $y$-axis. (b) a path along $[-20,20]\times\{y_0\}$ with $y_0=0.5$ is simulated to show the asymptotic behaviour along the $x$-axis.}
\label{fig: asymp_x_y}
\end{figure}

The last thing worth pointing out is that near the resonators the surfaces increase rapidly and are very jagged and non-smooth. In such regions, it will be difficult to fit regression models for predictive purposes as small changes in $(x,y)$ lead to large and sudden changes in $F^{0.1}_{pq}(x,y)$. On the other hand, as we move far away from the resonators, the surfaces become flat, also making it difficult for prediction purposes since large changes in position lead to little change in $F^{0.1}_{pq}$. When considering the inverse in these regions, it will be ill-posed as then small changes in input $F_{pq}$ lead to large changes in $(x,y)$.

We present a few more heatmaps in Figure~\ref{fig: many_pos_plots} for this system of resonators, but for smaller intruder radius $r_\Omega$. As one could expect, intruders with smaller radii have a weaker, but still noticeable effect on the resonator eigenmodes. For $r_\Omega=0.06$ (upper panel), there is a strong response near the symmetric pairs with less coupling visible. For $r_\Omega=0.02$ (lower panel), the effect is even weaker and the resulting surfaces appear to have isolated halos around the resonators.

\begin{figure}
\centering
\includegraphics[width=0.8\textwidth]{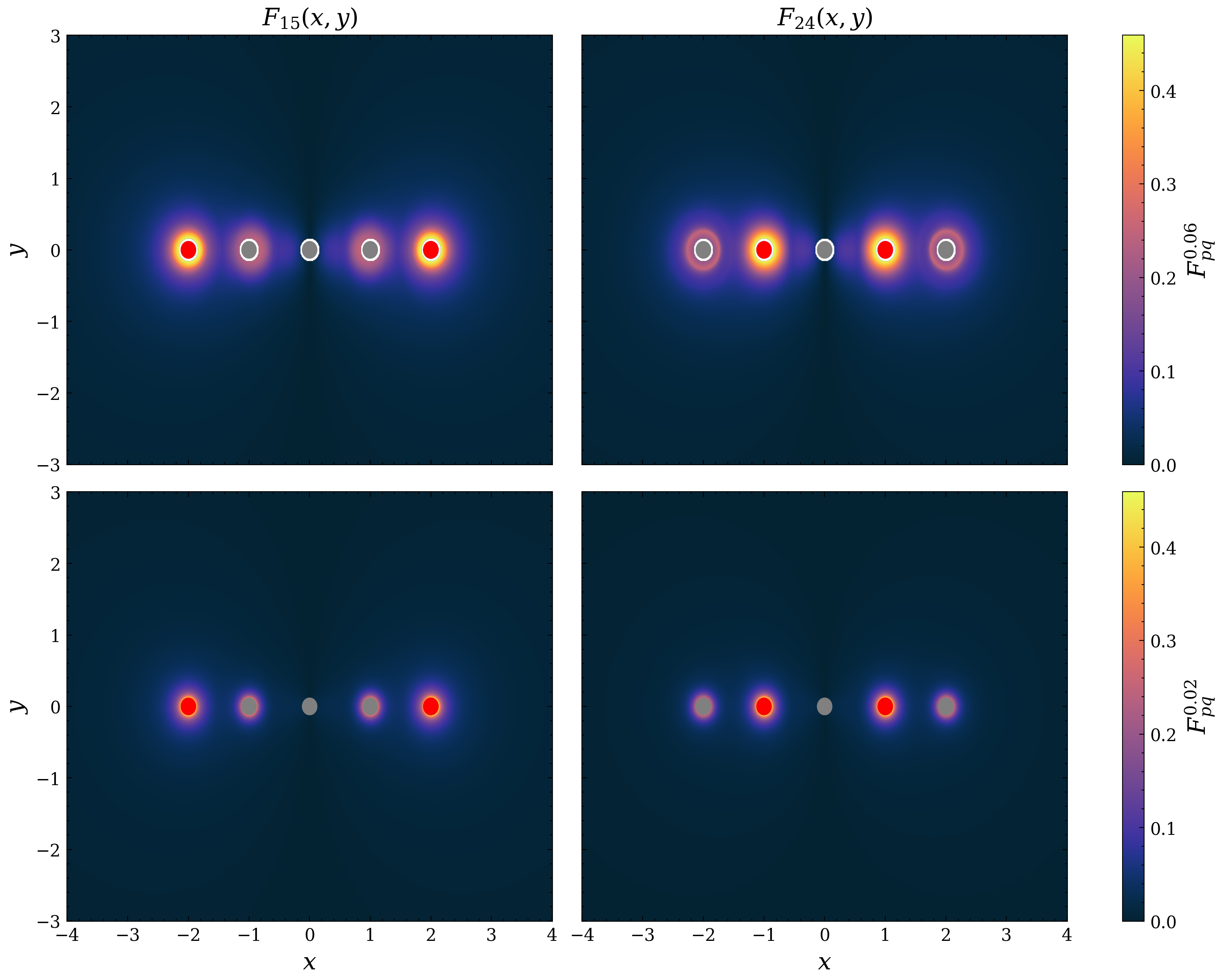}
\caption{Perturbations of system $D$ by an intruder of fixed radius but variable position. In the top row, heatmaps associated with the functions $F_{15}^{0.06}$ and $F_{24}^{0.06}$ are presented. In the bottom row are the heatmaps associated with $F_{15}^{0.02}$ and $F_{24}^{0.02}$. The symmetric pair of resonators used for sensing is marked in red, and the remaining resonators in gray. The white annuli surrounding the resonators represent regions where an intruder cannot exist as it would overlap the resonators.}
\label{fig: many_pos_plots}
\end{figure}

\subsection{Other resonator systems}
Having discussed some of the potential issues for the sensing functions $F_{15}$ and $F_{24}$ associated with the system of five collinear resonators, it is of interest to see if these issues persist in other systems. We present three new arrangements: one horseshoe shape with a single line of mirror symmetry, one fully asymmetric array and, finally, a sparse system with the resonators relatively far apart.


\subsubsection{Horseshoe arrangement of resonators}

The horseshoe system presented in Figure~\ref{fig: horseshoe_pos_many} shows that all the pairs are strongly correlated with each other. For example, looking at the resonators at the tips of the horseshoe, they pick up a strong signal from the intruder when it is located near one of the other resonators. This happens for all three symmetric pairs. We also check that the varying radii are consistent with the previous analysis by once again taking a uniformly random sample of $200$ points over the domain $[-4,5.5]\times[-5,5]$. As usual we avoid sampling points that would lead to overlapping with the resonators. Figure~\ref{fig:many_rad}(a) reinforces our hope that the radius curves are parallel.

\begin{figure}
\centering
\includegraphics[width=0.9\textwidth]{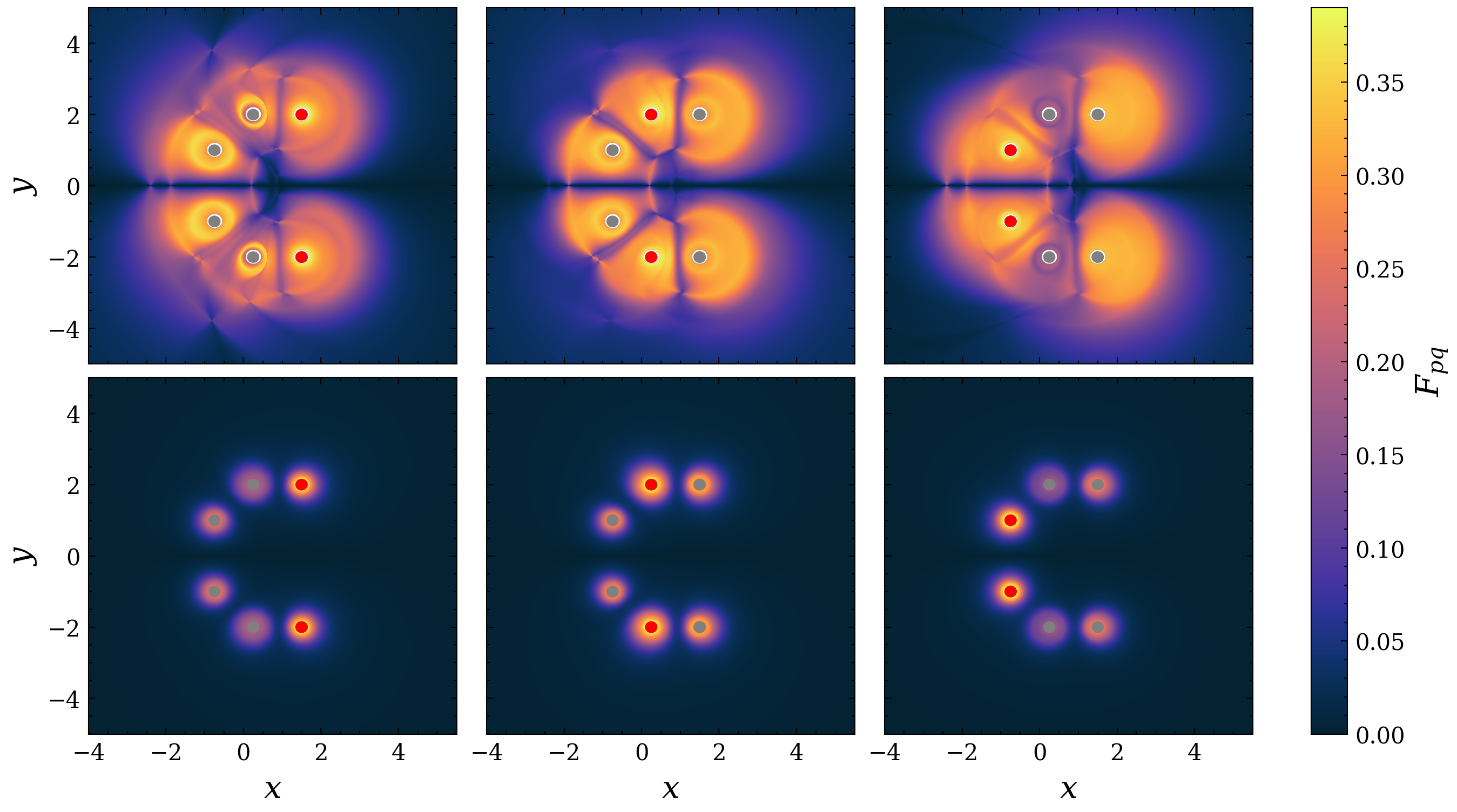}
\caption{Perturbations of a horseshoe system by an intruder of fixed radius but variable position. In the top row, heatmaps associated with intruder radius $r_\Omega=0.1$ are presented, while in the bottom row, heatmaps associated with $r_\Omega=0.05$ are presented. The symmetric pair of resonators used for sensing is marked in red, and the remaining resonators in gray. The setup is mirror symmetric with respect to the $y=0$ line.}
\label{fig: horseshoe_pos_many}
\end{figure}

\begin{figure}
\centering
\begin{tikzpicture}
    \node[inner sep=0pt] (horseshoe) at (0,0)
    {\includegraphics[width=.31\textwidth]{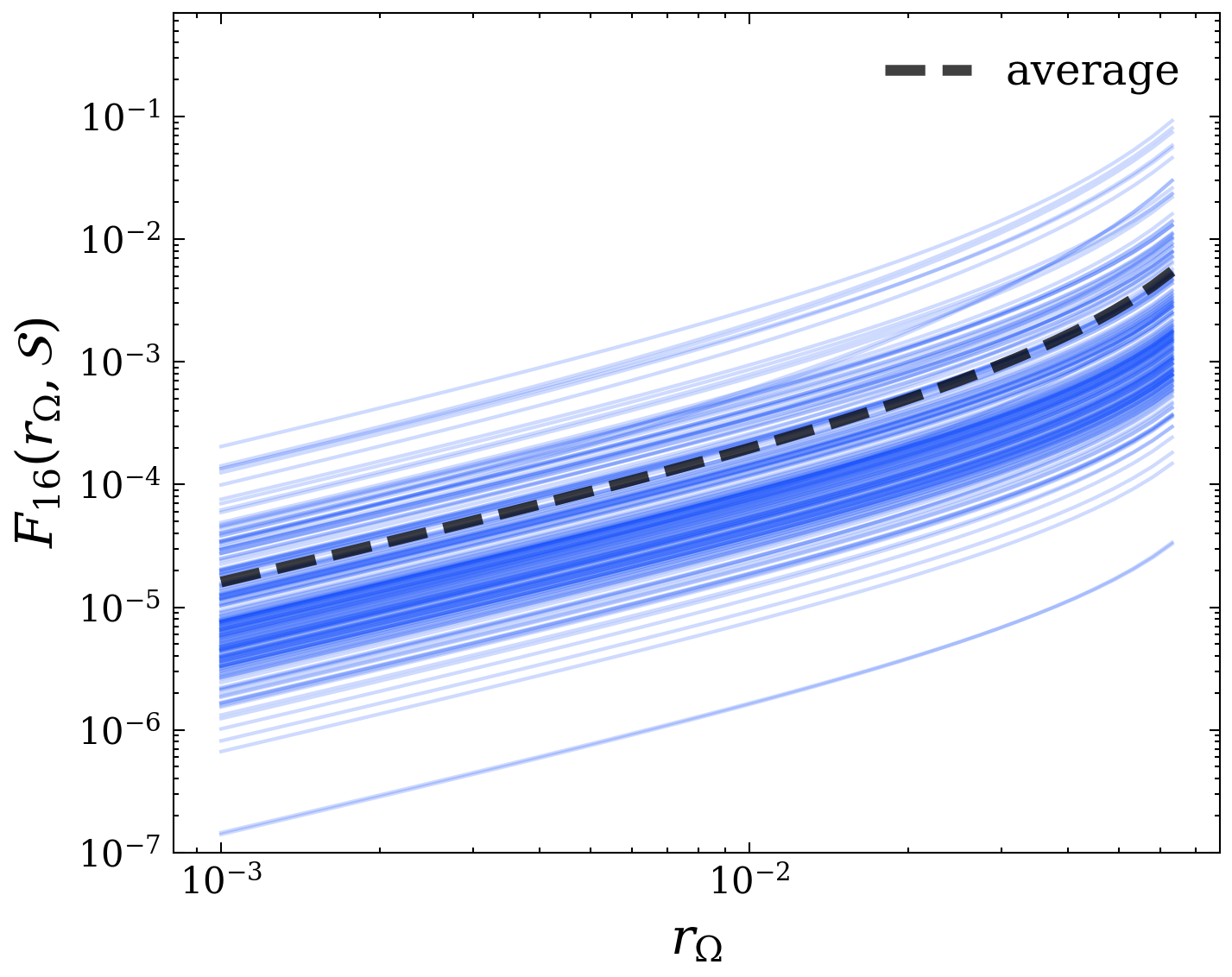}};
    \node[scale=0.8] at (2,-1.1) {(a)};
    \node[inner sep=0pt] (latsym) at (5.1,0)
    {\includegraphics[width=.31\textwidth]{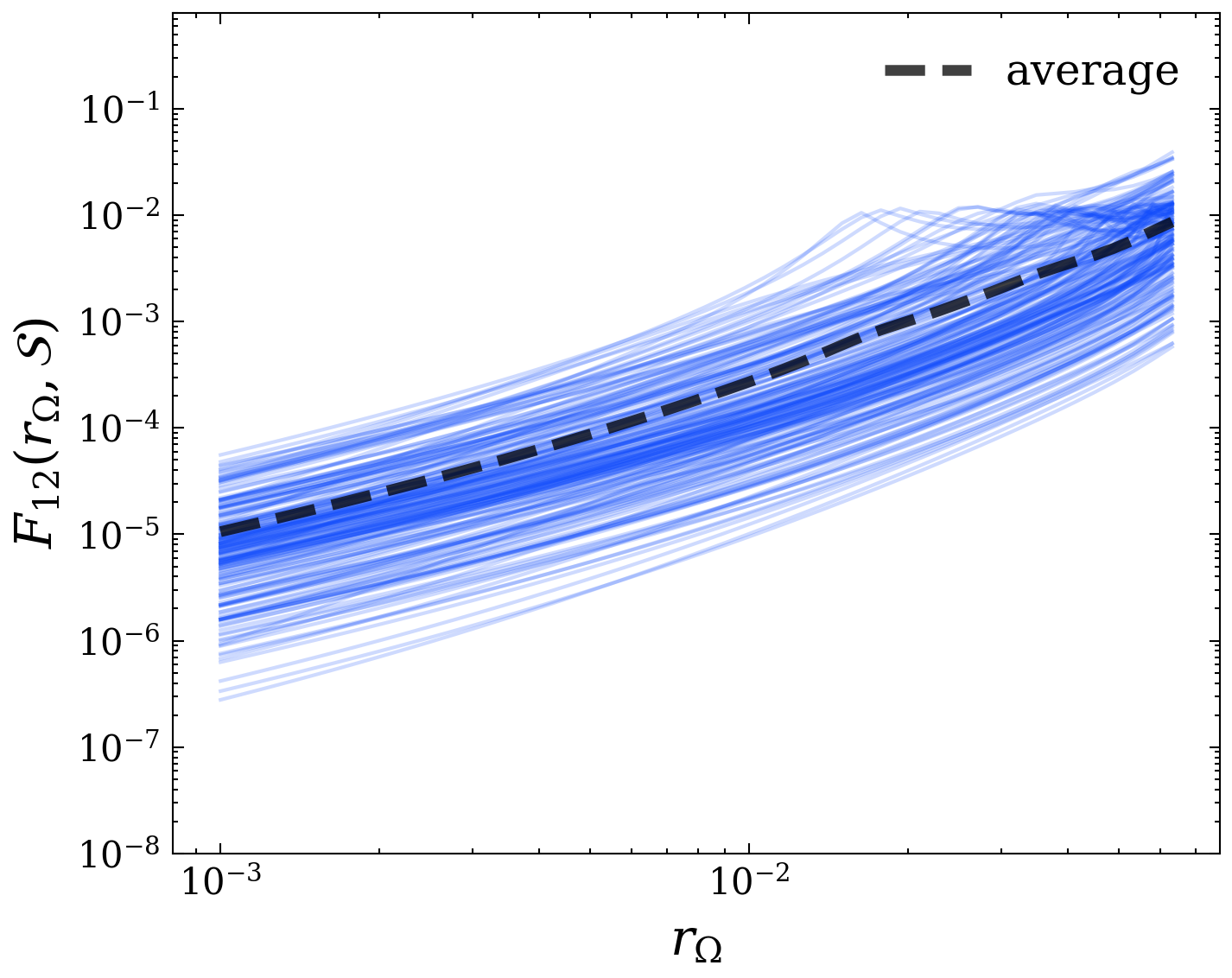}};
    \node[scale=0.8] at (5.1+2,-1.1) {(b)};
    \node[inner sep=0pt] (sparse) at (10.2,0)
    {\includegraphics[width=.31\textwidth]{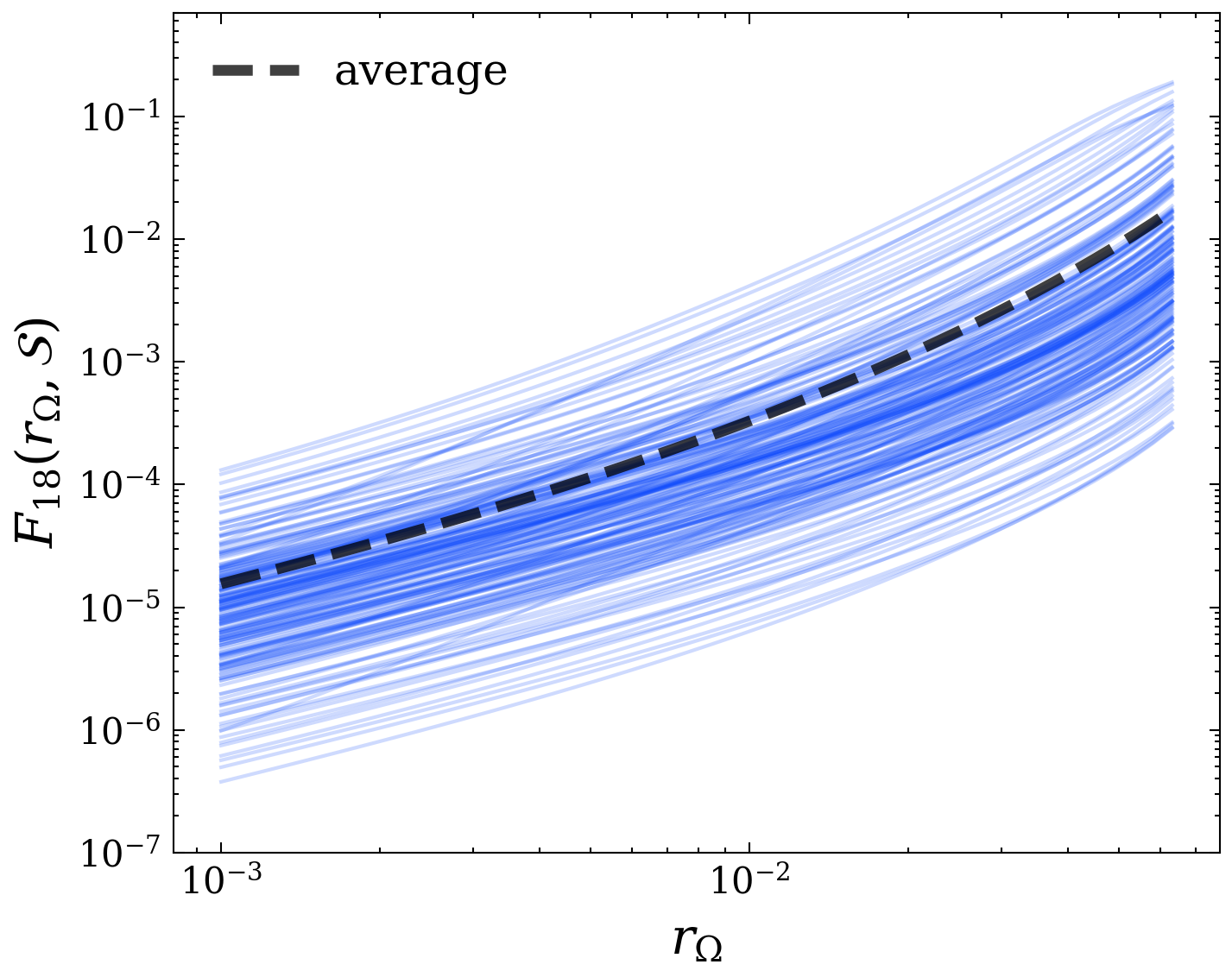}};
    \node[scale=0.8] at (10.2+2,-1.1) {(c)};
\end{tikzpicture}
\caption{Perturbations of a resonator system by an intruder of variable radius sampled at $200$ locations. The sample points $\mathcal{S}$ are chosen uniformly at random over a rectangular domain $\mathcal{R}$, such that they don't overlap with the resonators. (a) shows the horseshoe system, with $\mathcal{R}=[-4,5.5]\times[-5,5]$. The function $F_{16}$ corresponds to the symmetric pair located at the tips of the horseshoe. (b) shows the asymmetric array from Figure~\ref{fig: latsym_pos}, with $\mathcal{R}=[-7,7]\times[-7,7]$. (c) shows the sparse system from Figure~\ref{fig: sparse_pos_many}, with $\mathcal{R}=[0,20]\times[-10,10]$.}
\label{fig:many_rad}
\end{figure}

\subsubsection{Asymmetric arrangement with latent symmetry}

The array shown in Figure~\ref{fig: latsym_pos} is one of the smallest systems that contain a latent symmetry. It consists of ten resonators and only one pair is symmetric. An advantage of using asymmetric systems is that they are able to distinguish between all of the domain. Something interesting to note about this system is that the sensing pair appears to only be strongly correlated with a subset of the system. The radius curves in Figure~\ref{fig:many_rad}(b) appear to be parallel again with the exception of some displaying a hump towards larger radii. This is likely due to those curves corresponding to locations located in between the resonators where stronger interference effects occur for larger radii.

\begin{figure}
\centering
\includegraphics[width=0.8\textwidth]{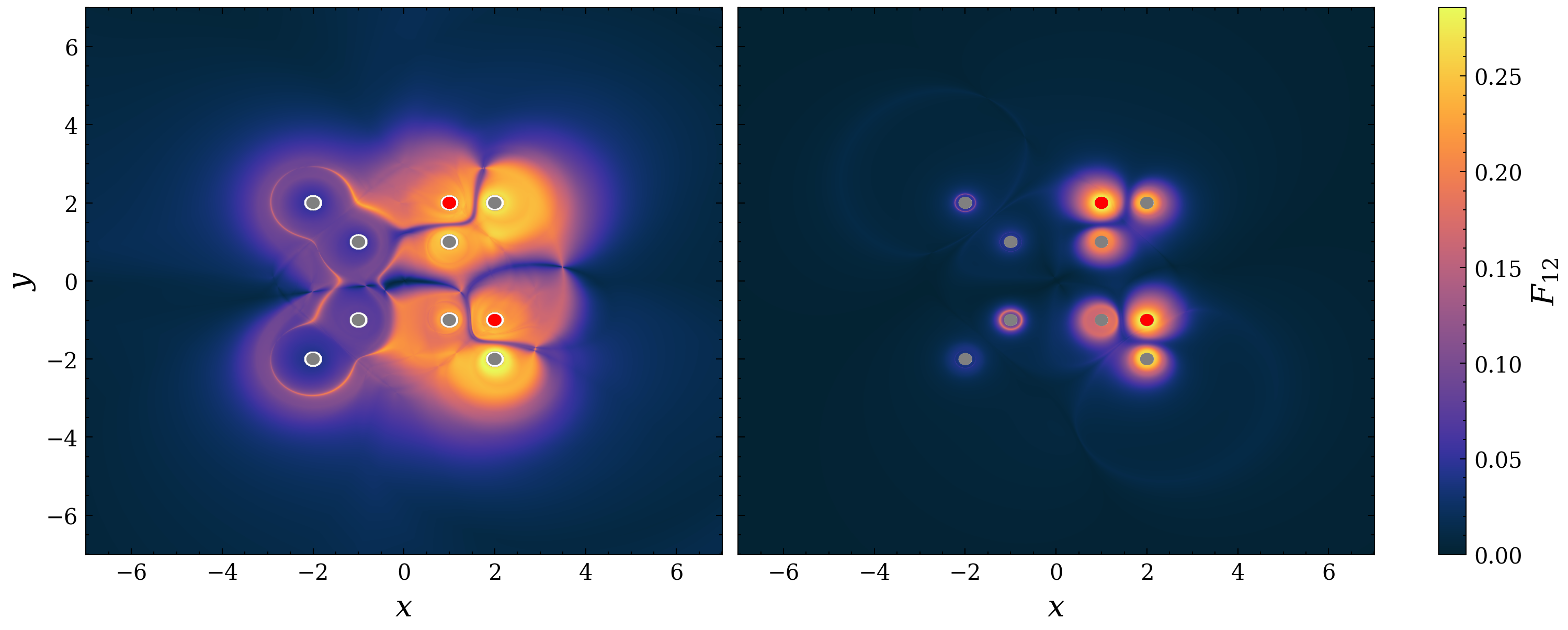}
\caption{Perturbations of an asymmetric but latently symmetric system by an intruder of fixed radius but variable position. The heatmap on the left corresponds to an intruder of radius $r_\Omega=0.1$, while the heatmap on the right corresponds to $r_\Omega=0.05$. The symmetric pair of resonators used for sensing is marked in red, and the remaining resonators in gray. The distinguishable feature of these systems with latent symmetry is that there are no mirror symmetries.}
\label{fig: latsym_pos}
\end{figure}

\subsubsection{Sparse arrangement of resonators}

The sparse system in Figure~\ref{fig: sparse_pos_many} consists of 9 resonators, arranged as four symmetric pairs formed by a square reflected in the line $x=0$, and a central resonator that breaks some of the mirror symmetry. The coordinates of the resonator pairs are $\{(5,2), (-5,2)\}$, $\{(1,2),(-1,2)\}$, $\{(5,-2),(-5,-2)\}$ and $\{(1,-2),(-1,-2)\}$ and the central resonator is located at $(0,2)$. Figure~\ref{fig: sparse_pos_many} shows the effect the intruder's location has on the system. Since the system is symmetric, we plot only the right-hand side. We can see that some resonator pairs, that is, $\{(5,2), (-5,2)\}$ and $\{(5,-2),(-5,-2)\}$, pick up signals far away, while the other two pairs are only able to feel the nearby region. The pair $\{(1,-2),(-1,-2)\}$ seems to be able to sense relatively far as well for intruder radius of $r_\Omega=0.1$. This overall behaviour, where the sensing pairs complement each other is promising for detecting intruders that are both near and far. Furthermore, the radius curves in Figure~\ref{fig:many_rad}(c) are once again parallel and suggest that the functions $F_{pq}$ are separable into a product of a radial function and a function that depends on the position variable only, regardless of the system setup.

\begin{figure}
\centering
\includegraphics[width=\textwidth]{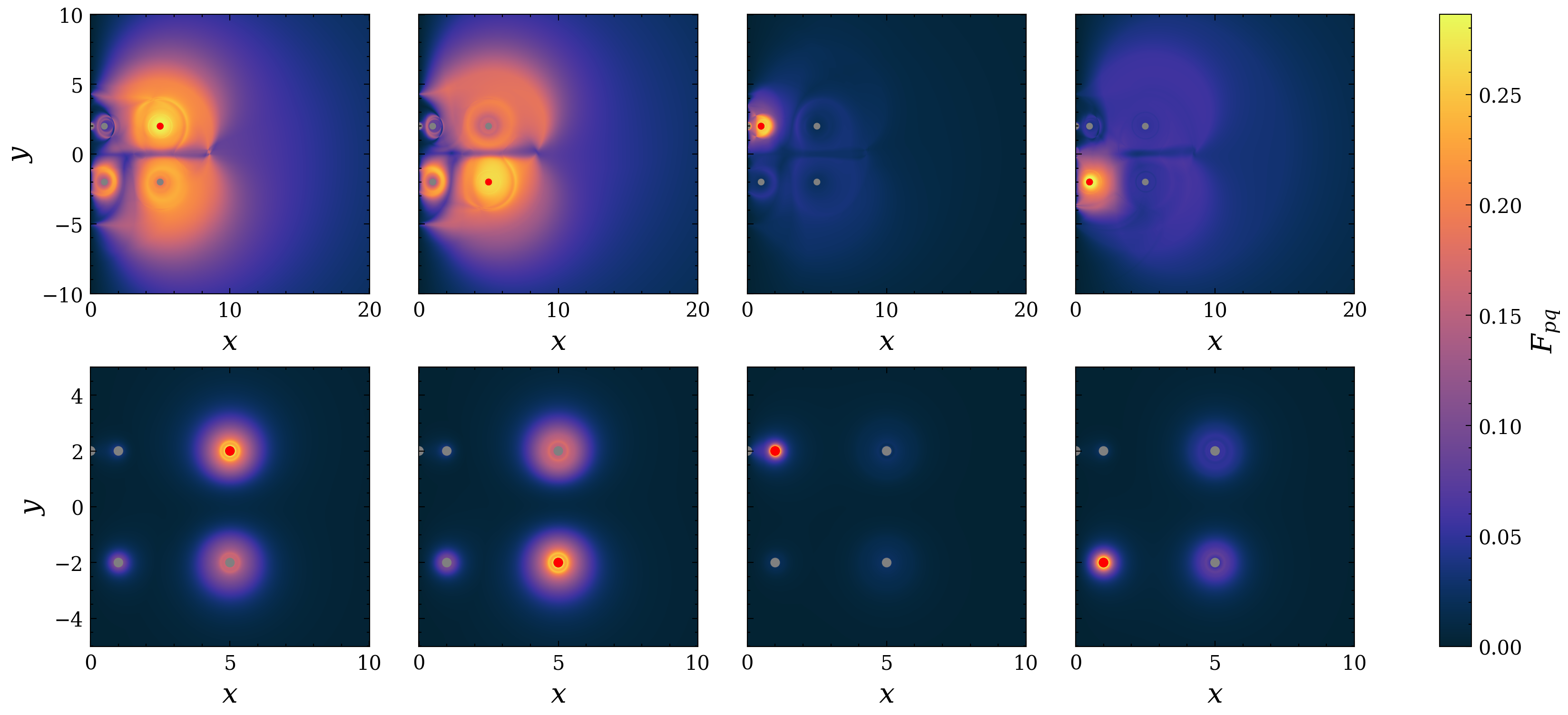}
\caption{Perturbations of a sparse system by an intruder of fixed radius but variable position. The heatmaps in the top row correspond to an intruder of radius $r_\Omega=0.1$, while the heatmaps in the bottom row correspond to $r_\Omega=0.05$. The resonator corresponding to $F_{pq}$ is highlighted. Its pair is not shown, as we plot only half of the symmetric array here (the mirror symmetry is along the line $x=0$). There is a resonator at coordinates $(0,2)$ which breaks the mirror symmetry along the line $y=0$. Note the difference in scale between the two rows and the overall scale of these heatmaps. The pairs further away from the $x=0$ line of symmetry exhibit strong sensing far away from the resonators.}
\label{fig: sparse_pos_many}
\end{figure}

%% file: 4_Detection.tex
\section{Target identification and localization} \label{sec:sensing}

In section~\ref{ch: pertResSys}, we studied the forward problem, to understand the behaviour of the functions $F_{pq}$ through which we measure perturbations caused by an intruder to a system of symmetric resonators. We are now ready to tackle the inverse problem: given observations of $F_{pq}$, predict the position and radius of the intruder. Predicting the radius given an already known location for the intruder $\Omega$ is relatively easy since it is a one-dimensional regression problem. However, predicting the location of $\Omega$ is much harder due to the issues we discussed in section~\ref{sec: varying position} about multivaluedness and ill-conditioning\footnote{By ill-conditioning we refer to the idea that the output data is sensitive to small changes in input data and vice-versa.} in certain domains. Hence, the majority of our effort will be devoted to predicting the position of an intruder.

\subsection{Predicting radius}\label{sec: predicting radius}
Recall the setup consisting of five collinear resonators and an intruder located at $(3,0)$, as depicted in Figure~\ref{fig: 5 chain preturbed}. We showed that for the intruder's radii up to around $r_\Omega=0.08$ both $F_{15}$ and $F_{24}$ follow similar trajectories, so we focus only on $F_{15}$. Throughout this section we restrict our attention to intruders with $r_\Omega \le 0.08$ because our sensing mechanism is based on measuring the breaking of a latent symmetry and, as is typical of perturbative methods, this is most effective when the perturbation is small. As shown in Figure \ref{fig: radius1}, when $r_\Omega$ approaches the radius of the array's resonators the functions $F_{pq}$ become oscillatory and no longer determine $r_\Omega$ uniquely.

We saw the presence of upward curvature in the trajectory in the log-log scale of Figure~\ref{fig: radius1}, indicating growth faster than power laws. A natural guess would be to consider exponential-type behaviour. In particular, we fit a regression model by tuning the parameters of the guess function $f(r;\theta) = a\exp{(br^c)}$, where $\theta=(a,b,c)$. We discretise a domain for each parameter and find the optimal values such that the mean-squared loss function
\begin{equation}
    \mathcal{L}(\theta) = \frac{1}{n}\sum_{i=1}^n\left|f(r_i;\theta) - F_{15}(r_i)\right|^2
\end{equation}
is minimised. Here, $r_1,...,r_n$ denote radius samples for $r_\Omega$. We use the \texttt{curve\_fit} function from the \texttt{scipy.optimise} Python package to minimise the loss function. The underlying optimisation algorithm used is a combination of gradient descent and the Gauss-Newton method. The optimised parameters are found to be $\theta_{\text{opt}}=(9.31\times10^{-4},5.589,0.826)$. Figure~\ref{fig: fitted_radius} shows how this compares with the true function $F_{15}(r_\Omega,3,0)$. The pointwise absolute error $\text{err}_f = |F_{15}(r_i)-f(r_i;\theta_\text{opt})|$ is also plotted showing that this exponential guess function is within three to five decimal point precision from the true function.

\begin{figure}
\centering
\includegraphics[width=0.7\textwidth]{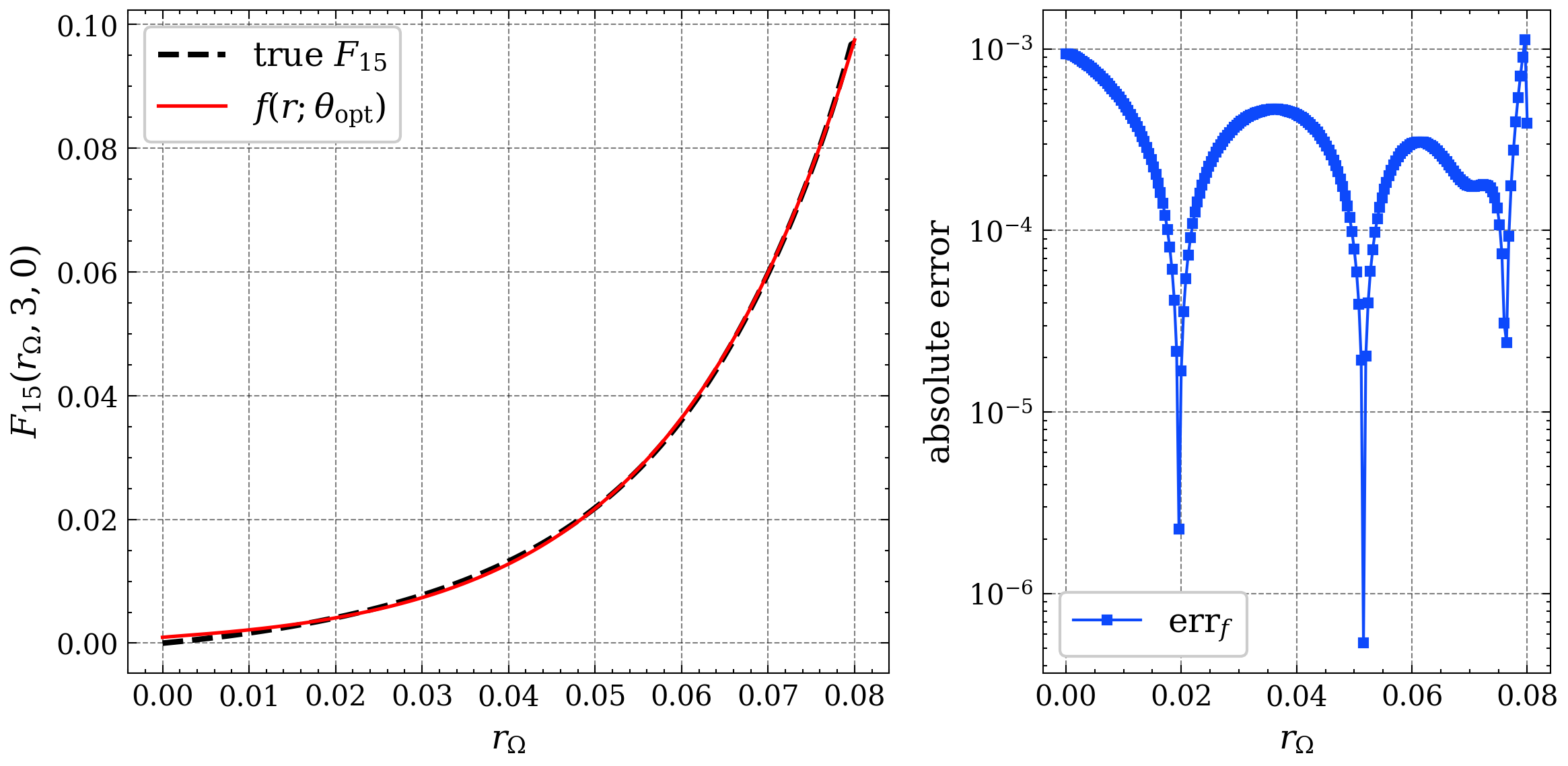}
\caption{On the left, a guess function of the form $f(r;\theta)=a\exp(br^c)$ is fitted to the true model $F_{15}(r,3,0)$. After optimisation the optimal values for $\theta$ are $a=9.31\times10^{-4}$, $b=5.589$, and $c=0.826$. The error between the true model and the predictive model is plotted on the right, and it shows accuracy ranging between three to five significant figures.}
\label{fig: fitted_radius}
\end{figure}

In practical applications, measurement noise needs to be considered. To explore this, it is typical to assume multiplicative noise in sensing problems where the amount of noise scales with the strength of the response. Hence, let us assume the actual observations we make are modelled by $F_\text{obs}^{i} = F_{15}(r_i)(1+\eta)$ with $\eta\sim\mathcal{N}(0,\sigma^2)$. Suppose further that $\sigma=0.2$, a deviation of $20\%$ from the true value. A trick to re-model this into a framework with additive noise is by taking logarithms, to give
\begin{equation}
    \log{F^i_\text{obs} \approx} \log{F_{15}}(r_i) + \eta,
    \label{eq: log_noise}
\end{equation}
We can now construct a Gaussian Process Regression (GPR) model, taking inspiration from previous successful applications to inverse scattering problems \cite{bai2024gaussian, sung2024functional}. To demonstrate the practicality of GPR in this problem, we train the model on only ten noisy samples corresponding to ten evenly distributed radii from the interval $[0,0.08]$. We use the squared exponential kernel for the covariance function, and we perform a Maximum a Posteriori (MAP) approximation on the length scale and amplitude hyperparameters. This is done by minimising the negative log-likelihood function. The \verb|ScipyMinimize| function from the \texttt{JAX}\nobreakdash-based optimisation package \texttt{jaxopt} was used as it incorporates automatic differentiation with the BFGS algorithm\footnote{The Broyden-Fletcher-Goldfarb-Shanno (BFGS) algorithm is a quasi-Newton method used in solving unconstrained nonlinear optimisation problems \cite{NAYAK2020135}.} for unconstrained nonlinear optimisation problems. Finally, we condition the model on a discretisation of $[0,0.08]$ into two hundred evenly distributed points.

\begin{figure}
\centering
\includegraphics[width=0.7\textwidth]{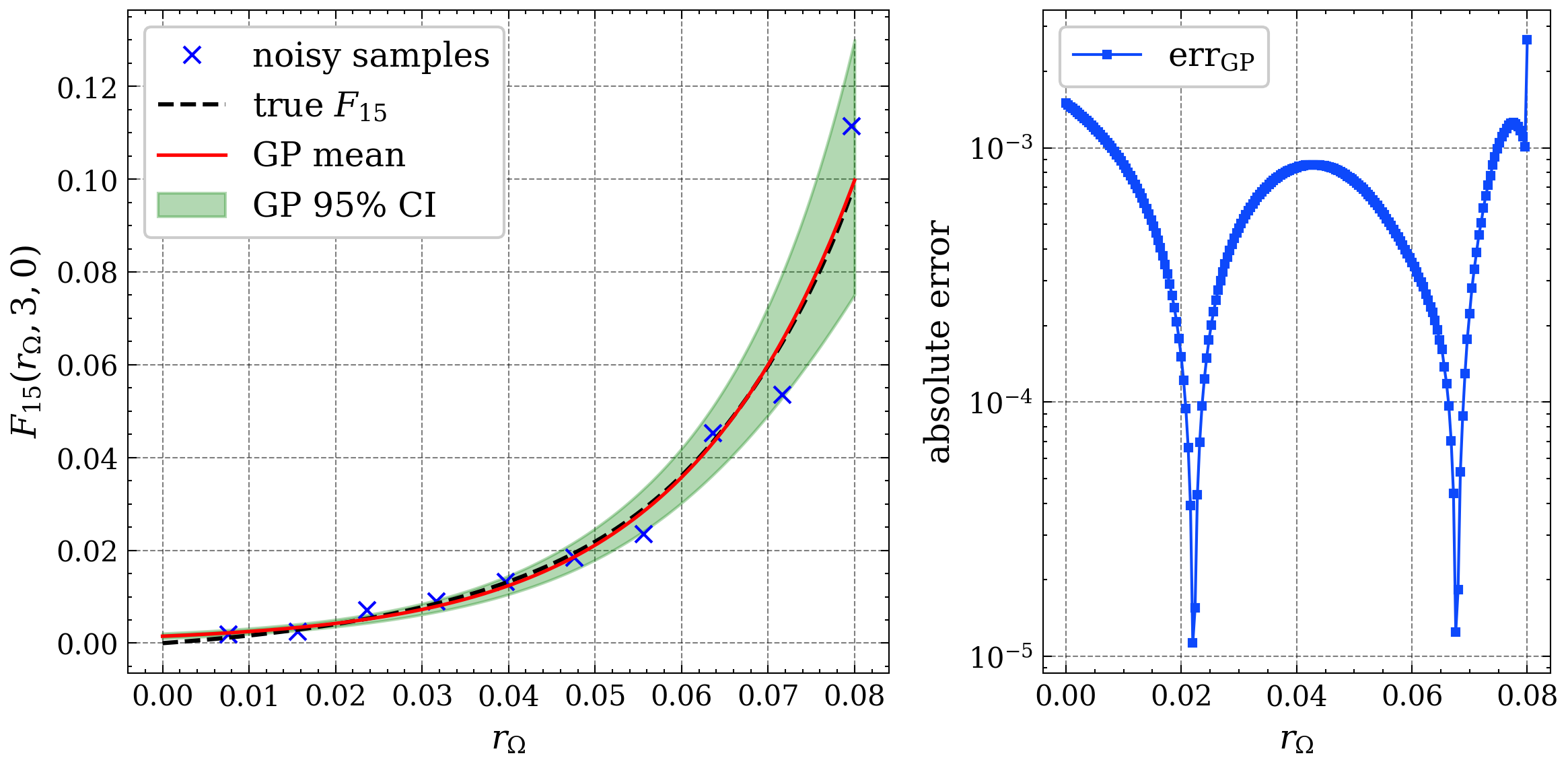}
\caption{Gaussian process regression model for the true function $F_{15}(r_\Omega,3,0)$. The process is fitted to $10$ samples of multiplicative noise from a normal distribution with mean $m=1$ and variance $\sigma^2=0.04$. After conditioning, the mean Gaussian process function is plotted in red, and a $95\%$ confidence region is plotted around it in green.}
\label{fig: pred_noisy_radius}
\end{figure}

The posterior log-mean $\tilde\mu$ and associated variance $K_*$ form our predictive model and related uncertainty, respectively. The final thing left to do is to transform back to the multiplicative noise framework. The caveat is to note that for a random variable $\xi\sim\mathcal{N}(m,\sigma^2)$
\begin{equation}
    \mathbb{E}[\exp(\xi)] = \exp(m+\tfrac{1}{2}\sigma^2).
\end{equation}
Therefore, when we exponentiate, the predictive mean becomes $\mu_*(r_i) = \exp(\tilde\mu(r_i) + K_*(r_i))$. Figure~\ref{fig: pred_noisy_radius} visualises how the model smooths out the noise and how the predicted curve closely follows the true function. Another benefit of GPR is that we can quantify the uncertainty in the prediction, represented by the green $95\%$ confidence interval. We see that the uncertainty increases as the radius gets larger which is expected since the amount of noise is proportional to the magnitude of the perturbation.

If one was to assume $f(r;\theta_\text{opt})$ represents the true function and multiplicative noise was applied to it, then the noisy model would be governed by $f(r;\theta_\text{opt})(1+\eta)$. However, as shown in Figure~\ref{fig: pred_noisy_errors}, it becomes highly oscillatory with larger radii, and due to its non-bijectivity, it will be impossible to obtain a unique value for $r_\Omega$ given an observation of $F_{15}$. Notice that this does not happen with the GPR model. Furthermore, we can quantify the accuracy of the two models in the presence of noise by calculating the pointwise absolute errors
\begin{equation}
    \text{err}_{\tilde{f}} (r_i)=|F_{15}(r_i) - \tilde{f}(r_i;\theta_\text{opt})|, \quad \text{err}_\text{GP} (r_i)=|F_{15}(r_i) - \mu_*(r_i)|,
\end{equation}
where $\tilde{f}=(1+\eta)f$ denotes the noisy model. The corresponding results are shown in Figure~\ref{fig: pred_noisy_errors}. We see that the accuracy of the GPR model remains consistently low as $r_\Omega$ increases with occasional dips where the trajectory of $\mu_*$ crosses the true function. On the other hand, we can see an overall increase in the error for $f$ which is expected. It is worth mentioning that one can improve the accuracy of the GPR model even further by increasing its training set. 

\begin{figure}
\centering\includegraphics[width=0.9\textwidth]{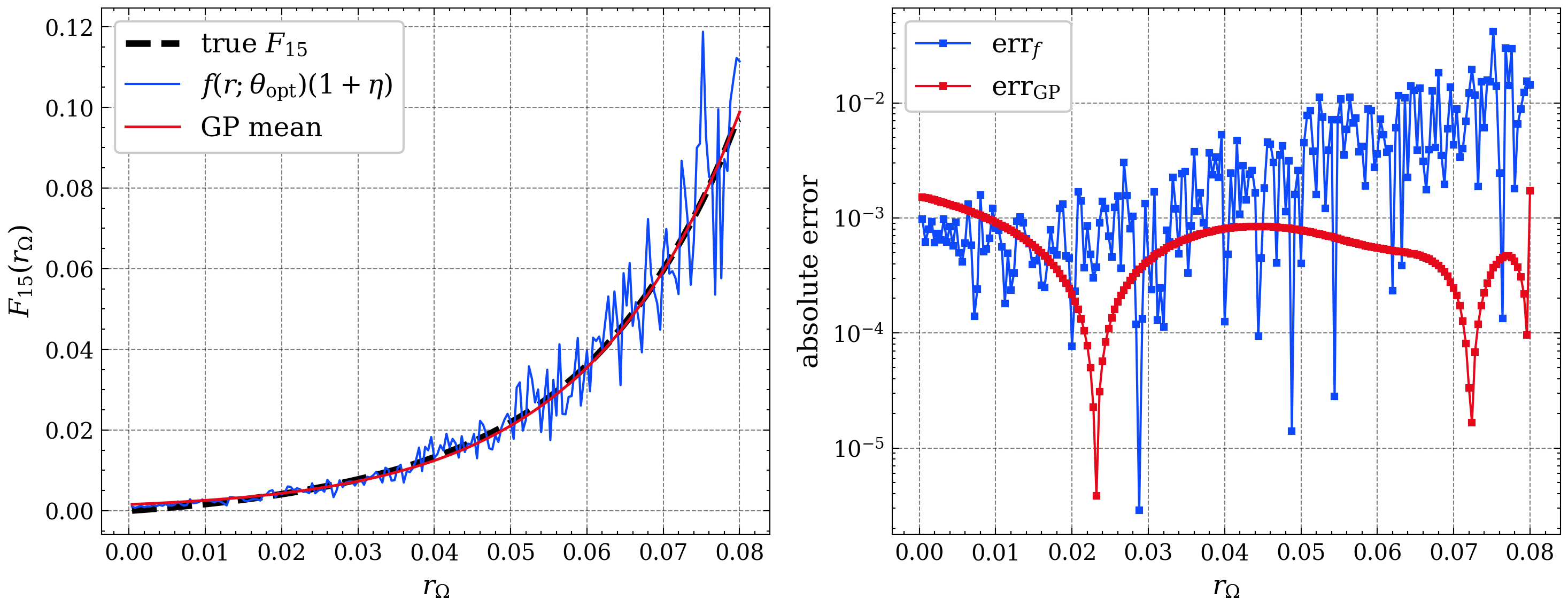}
\caption{Comparison plots for the noisy guess function $f(r;\theta_\text{opt})(1+\eta)$ with $\eta\sim\mathcal{N}(0,0.04)$, and the Gaussian process regression model. On the left, a sample of the noisy guess function is plotted, as well as the mean function of the conditioned Gaussian process. On the right, the pointwise absolute errors for both models are shown.}
\label{fig: pred_noisy_errors}
\end{figure}

Lastly, in section~\ref{sec: varying radius}, we showed that at different intruder locations, the $\boldsymbol{F}$ functions were essentially the same up to a multiplicative constant. Therefore, if we know the position of the intruder, the exponential guess function and the GPR model are appropriate predictive methods for the deterministic outputs and outputs affected by measurement noise, respectively.

\subsection{Predicting position}\label{sec: predicting position}

Similarly to the previous section, we will construct models for both deterministic and noisy observations of $F_{pq}$ to predict the intruder's position. We consider two different approaches with the noisy case: a multi-layer perceptron (MLP) and a Bayesian inference model. Once again, we will start with the simple system of five resonators with radii of $0.1$ arranged in a line. We assume the radius of the intruder $\Omega$ is fixed to $r_\Omega=0.06$. As we saw in the previous chapter, the surfaces representing the perturbation functions $F_{pq}^{0.06}(x,y)$ exhibited steep and flat regions, sharp fluctuations near the resonators, and a kink along the line $y=0$. This makes it extremely difficult to fit a global three-dimensional function which would parametrise these surfaces, and so, we will instead consider the inverse problem directly. Now, our inputs are observations $\boldsymbol{f}^{pq} = (F_{pq}(x_i,y_i))_{i=1}^n$ and the outputs are position coordinates $(x,y)\in\R^2$. We know that a true inverse function does not exist due to the issues with multivaluedness and ill-conditioning, but we represent its preimage symbolically by
\begin{equation}
    F^{-1}_{pq}(z) = \{(x,y)\in\R^2:F_{pq}(x,y) = z\},
\end{equation}
and thus, when applied to an observation, $F^{-1}(\boldsymbol{f}^{pq}_i)$ denotes our set, ideally of size one, of positions that gave rise to the observation.

The system of five resonators has two lines of symmetry which make $F_{15}$ and $F_{24}$ symmetric in four regions. We will restrict ourselves to just the top right quadrant, noting that this does not affect the practicality of the setup since it is invariant to translations and rotations.

\subsubsection{Dictionary-based model}\label{sec: dictionary model}

For the deterministic case, when there is no noise in the data, we propose a simple dictionary-based matching method. We create a grid of points in our domain of interest, which for the purposes of demonstration will be $[0,4]\times[0,3]$, and we use precision to two decimal places. That is, the descretised domain is $\{0.01,0.02,...,3.99,4.00\}\times\{0.01,0.02,...,2.99,3.00\}$. We measure $F_{pq}$ at each grid point and store the tuple of three values. Given an observation $f^{pq}$ we can locate the closest value in the dictionary for $F_{pq}$ and output the corresponding position.

Figure~\ref{fig: pred_positions_min} shows a sample for an intruder's position and its dictionary matches. Not surprisingly, we do not get a unique prediction when we use only information from $F_{15}$ or $F_{24}$. After all, mappings from $\R^2$ to $\R$ cannot have continuous inverses\footnote{If interested, see Brouwer's Invariance of Domain Theorem.}, hence these surface functions have level sets. The predicted positions in Figures~\ref{fig: pred_positions_min}(a) and~\ref{fig: pred_positions_min}(b) follow these level sets.

One way to improve the model is to combine information from both symmetric resonator pairs instead of treating them separately. Given observations $f^{15}$ and $f^{24}$, we consider the loss function
\begin{equation} \label{eq:Fcombined}
    F_\text{loss}(x,y) =\left|F_{15}(x,y)-f^{15}\right| + \left|F_{24}(x,y)-f^{24}\right|,
\end{equation}
which is equivalent to the mean absolute error (MAE) function, and minimise it over our set of grid points. This significantly improves our model's predictions, as demonstrated in Figure~\ref{fig: pred_positions_min}(c). We get a single prediction which matches the true location exactly. As we will see later, this happens for any position sampled on our grid. The reason is that we now technically have a mapping from $\R^2$ to $\R^2$ and the existence of unique points is possible. The contour lines of $F_\text{loss}$, for the particular sample point shown, cluster towards a unique minimum, which is the prediction.

\begin{figure}
\centering
\begin{tikzpicture}
    \node at (0,0) (plots) {\includegraphics[width=0.95\textwidth]{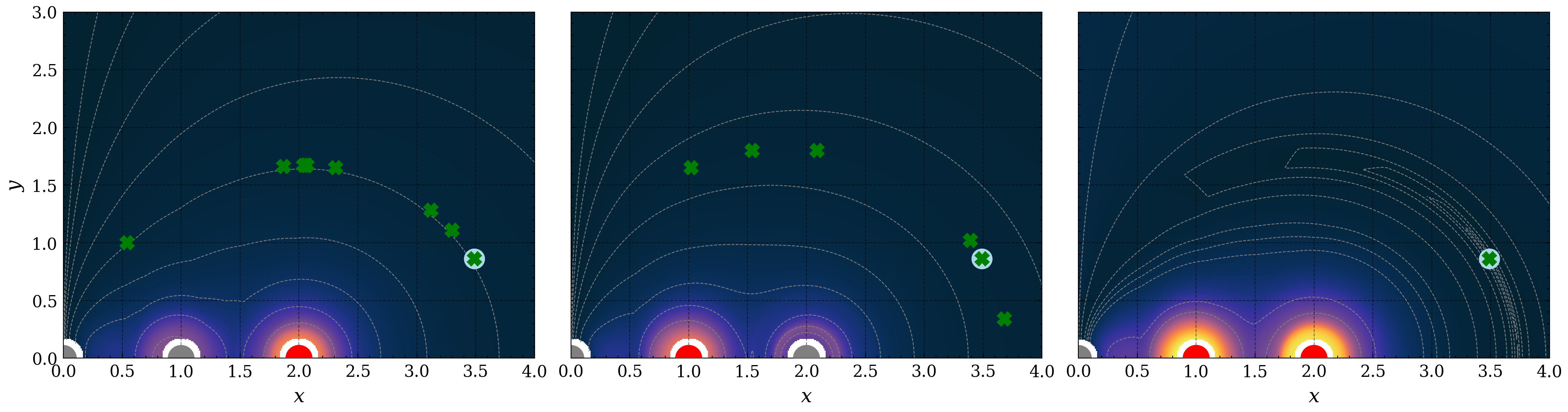}};
    \node[white,scale=0.9,anchor=west] at (-6.8,1.5) {(a) $F_{15}$};
    \node[white,scale=0.9,anchor=west] at (-2,1.5) {(b) $F_{24}$};
    \node[white,scale=0.9,anchor=west] at (2.8,1.5) {(c) Combined $F_\text{loss}$};
\end{tikzpicture}
\caption{Prediction of intruder's position using a dictionary-based model. (a) uses just $F_{15}$, (b) uses just $F_{24}$ and (c) uses the combined function $F_{\text{loss}}$ from \eqref{eq:Fcombined}. The true location is labeled with a white circle, while the predictions are given by green crosses. The intruder is of fixed radius $r_\Omega=0.06$. Level sets are indicated by the white curves. 
}
\label{fig: pred_positions_min}
\end{figure}

The improved model performs with exceptional accuracy in the absence of noise. It is also not very computationally heavy given that a heatmap needs to be created only once, and then locating the closest match is quick. For reference, the implementation was done using Python's \texttt{numpy} library which allows vectorisation of arithmetic, and it takes less than one second to locate the closest match and ten seconds to simulate a heatmap of $400\times300$ pixels on a typical laptop processor running at approximately $3.0$ GHz. Processing time can be drastically reduced by using parallelisation and better computational resources, hence, we have constructed a practical model for the deterministic case.

Measurement noise can be incorporated in a similar way to how it was done when predicting the radius of an intruder. We assume noisy observations governed by
\begin{equation}
    f^{pq}_i = F_{pq}(x_i, y_i)(1+\eta)
    \label{eq: noisy_model_positions}
\end{equation}
with $\eta\sim\mathcal{N}(0,\sigma^2)$ and we transform them into log-observations with additive noise as was done in \eqref{eq: log_noise}. The dictionary-based model can still be used to infer about the intruder's true position, but this is only done in terms of confidence regions that give us a measure of how likely it is the intruder's exact position was in the region given its predicted position from the noisy observation. Figure~\ref{fig: pred_positions_min_noisy} demonstrates this idea by visualising $95\%$ confidence regions for $F_{15}$ and $F_{24}$ separately. One could take their intersection for a smaller confidence region. However, to more accurately predict the position of an intruder given a noisy observation, we will attempt to construct a multi-layer perceptron (MLP) and a Bayesian inference (BI) model with the hope that they will filter out some of the noise.

\begin{figure}
\centering
\includegraphics[width=0.5\textwidth]{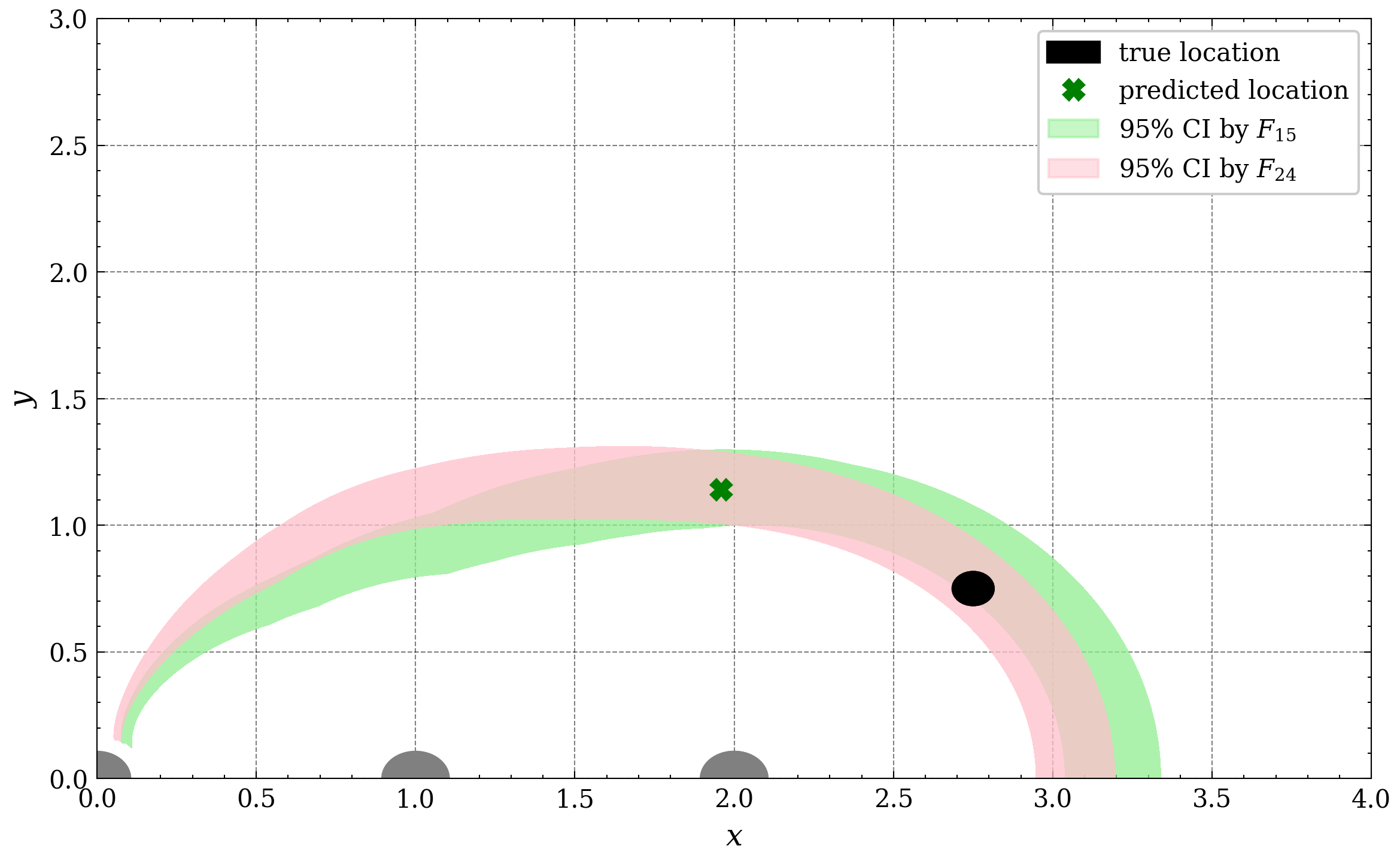}
\caption{Dictionary-based model applied to a noisy observation. Despite not being able to accurately use the deterministic dictionary-based model for prediction when measurement noise is involved, confidence regions can still be determined using the sensing functions $F_{15}$ and $F_{24}$ separately. The $95\%$ confidence intervals are shown in green and pink, respectively. The resonators are shown in gray.}
\label{fig: pred_positions_min_noisy}
\end{figure}

\subsubsection{Multi-layer perceptron}

We will use fully-connected neural networks known as multi-layer perceptrons (MLPs), which are well suited to fitting ill-posed functions \cite{prince2023understanding}. Our MLP is a neural network consisting of five layers. The input layer takes in two-dimensional data, the samples $(f^{15},f^{24})$, and the output layer returns the intruder's coordinates $(x,y)$, which are also two-dimensional. Each of the three hidden layers comprises of $32$ neurons, so the network has $3330$ trainable parameters in total, including bias terms and weights. We apply the ReLU activation function between the layers and we use the Adam optimiser \cite{kingma2015adam} to minimise the mean-squared loss function. To train the network, we take a large dataset of $2000$ points sampled uniformly at random from the domain $[0.1,4]\times[0,3]$. We train the network in batches and apply early-stopping criteria to prevent overfitting. To do this, we split the dataset into three subsets - $70\%$ for training, $20\%$ for validation, and $10\%$ for testing. We then split the training and validation data into batches of size $30$ and monitor both the training loss and validation loss. The idea is that the training loss will only continue to decrease or stagnate while the validation loss will eventually begin increasing again, at which point the model is beginning to overfit and we stop training. In reality we do not know when this will happen, so we let the training loop run slightly longer, but keep track of the optimal parameters that give us the lowest validation loss. The losses from the training process are shown in Figure~\ref{fig: MLP_losses} with the optimal model marked by a horizontal dashed line. The MLP works notably better than the deterministic dictionary model (DM) applied to noisy observations of two different noise levels as demonstrated by Figure~\ref{fig: MLP_DM_preds}, where the MLP predictions are consistently closer to the true values.

\begin{figure}
\centering
\includegraphics[width=0.45\textwidth]{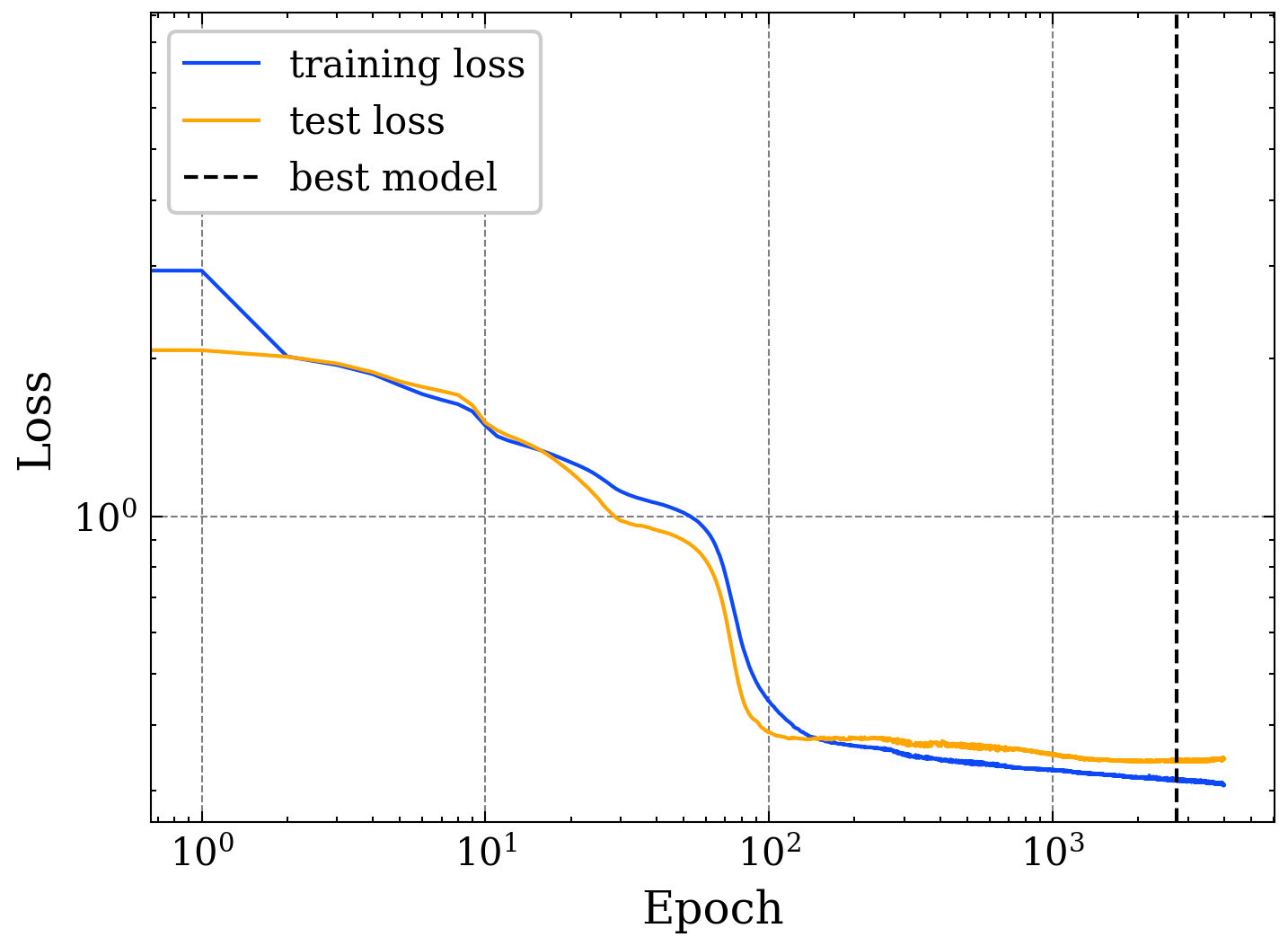}
\caption{Training and testing losses during the tuning of hyperparameters of the multi-layer perceptron (MLP) model. Training loss is with respect to the training dataset, and the test loss is with respect to the validation dataset. Each epoch consists of a batch size of $30$ data samples for both training and testing.}
\label{fig: MLP_losses}
\end{figure}

\begin{figure}
\centering
\includegraphics[width=0.4\textwidth]{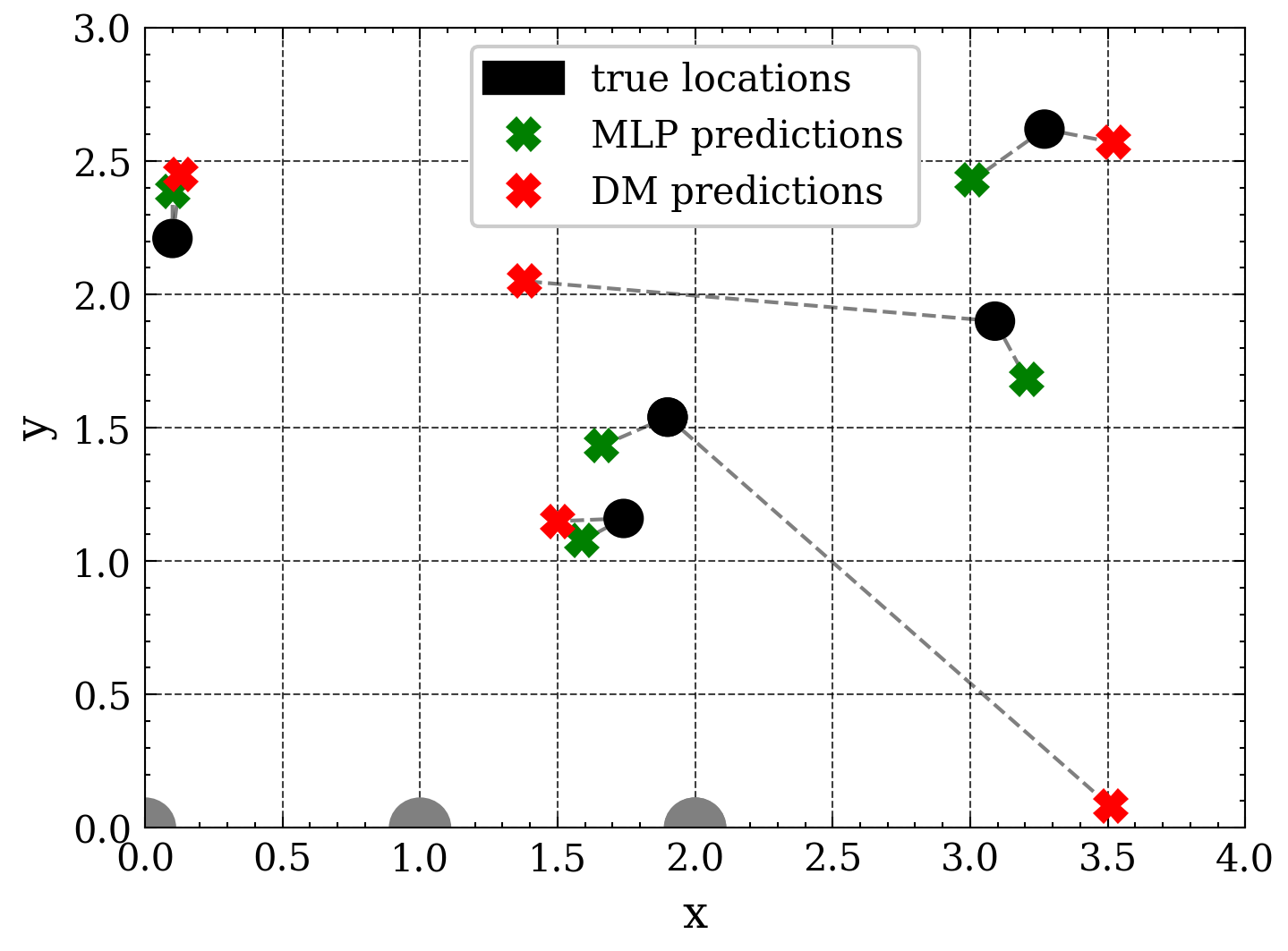}
\includegraphics[width=0.4\textwidth]{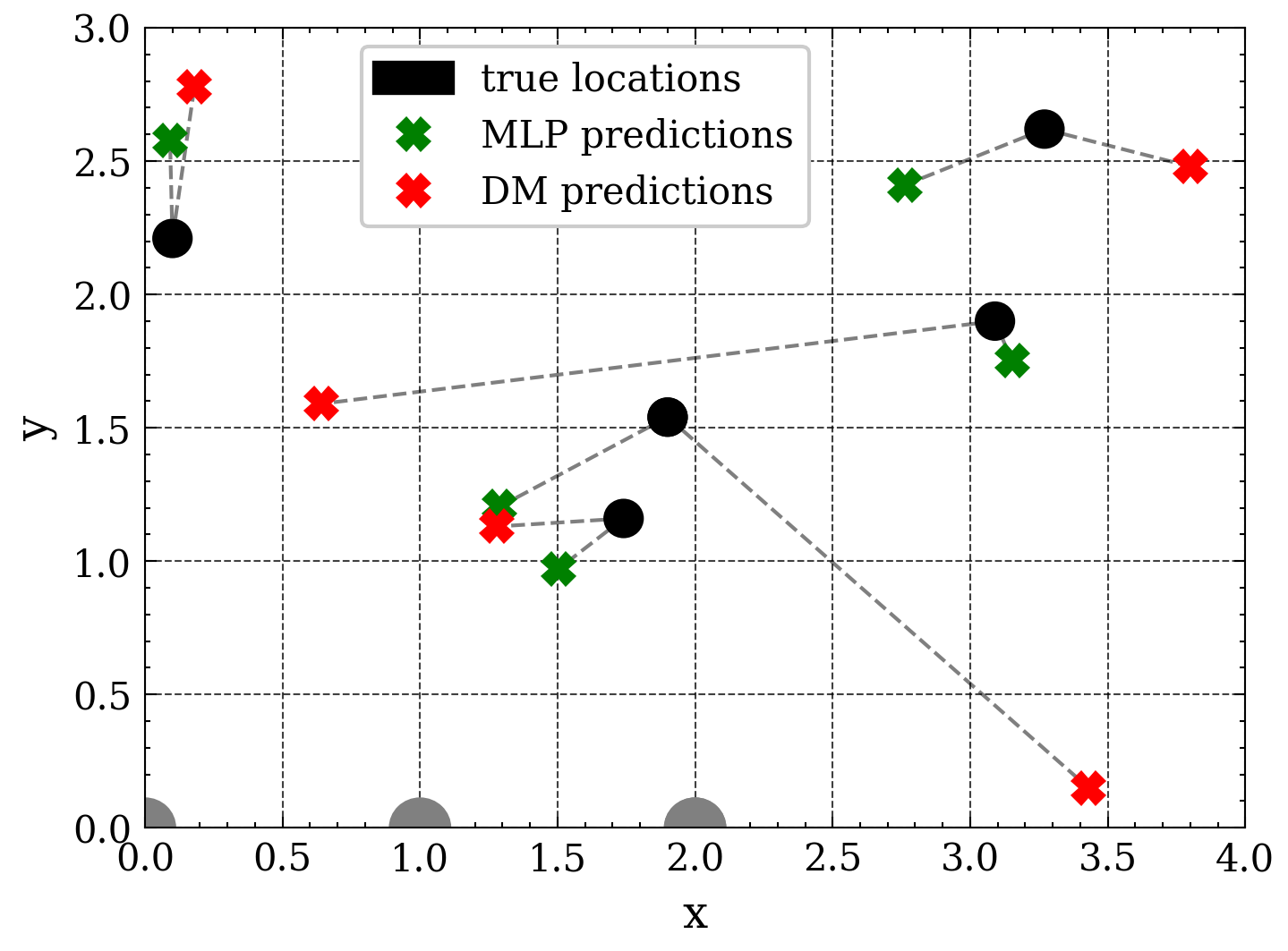}
\caption{Visualisation of several predictions by the multi-layer perceptron (MLP) and the dictionary-based model (DM). The figure on the left shows predictions of samples with measurement noise $\sigma=0.05$, and the figure on the right shows predictions of the same samples but with measurement noise $\sigma=0.1$. The resonators are shown in gray.}
\label{fig: MLP_DM_preds}
\end{figure}

\subsubsection{Bayesian inference}

Finally, we consider the problem from a Bayesian point of view. The idea is to treat the inputs and outputs as random variables, and the goal is to obtain a posterior distribution over the inputs given likelihood and prior distributions. Given that our noisy model \eqref{eq: noisy_model_positions} has multiplicative noise, we can convert it to additive noise by taking logarithms. The log-model now has the form $\tilde{f}^{pq}\approx\widetilde{F}_{pq}+\eta$, where $\tilde{f}^{pq}=\log{f^{pq}}$ and $\widetilde{F}_{pq}=\log{F_{pq}}$. Using Bayes' rule, the posterior can be written as
\begin{equation}
    p\left(x,y\big|\tilde{f}^{15},\tilde{f}^{24}\right) \propto p\left(\tilde{f}^{15},\tilde{f}^{24}\big|x,y\right)\, p(x,y).
\end{equation}
Since $\eta$ is assumed to be normally distributed with zero mean and variance $\sigma^2$, we have an analytic expression for the density function of the likelihood distribution with isotropic covariance matrix $\sigma^2I$,
\begin{equation}
    p(\tilde{f}^{15},\tilde{f}^{24}\big|x,y) = \frac{1}{2\pi\sigma^2}\exp\left(-\frac{1}{2\sigma^2}u^Tu\right),
\end{equation}
where $u$ denotes the vector $(\tilde{f}^{15}-\widetilde{F}_{15}(x,y),\, \tilde{f}^{24}-\widetilde{F}_{24}(x,y))^T$. For the prior distribution, we will consider a uniform distribution over the domain of interest, assuming there is equal chance of finding the intruder anywhere in the domain. However, in some situations one could assume a Gaussian prior which makes the posterior analytically tractable. For example, if there is a heat source in the domain and the intruder we are trying to detect is a virus or some small organism that is attracted to heat, we could use the center of the source and its temperature as the mean and variance of the prior.

Due to the highly non-convex landscape, MAP approximation for the intruder's coordinates is difficult to implement as optimisation algorithms struggle greatly to find the optimal coordinates. However, we can use the affine-invariant ensemble sampler for Markov chain Monte Carlo (MCMC) implemented in the \texttt{emcee} Python software \cite{foremanmackey2013emcee, goodman2010ensemble}. A key advantage of this algorithm is that it can yield short autocorrelation times even for skewed posteriors such as ours.

Let us test this on the point $(2, 1)$ with measurement noise $\sigma = 0.1$. For the first experiment, we assume a uniform prior over the domain of interest. We use the stretch-move sampler with scale parameter $a = 6$, chosen so that the mean acceptance rate lies within the recommended range of $0.2–0.5$ \cite{foremanmackey2013emcee}. For this run it was $0.26$. We simulate 16 walkers and, after discarding a short initial burn-in, run each for 10000 steps. The approximate independence of the samples is quantified by the integrated autocorrelation time $\tau$, which estimates the number of steps required between independent samples \cite{goodman2010ensemble}. We find $\tau\approx 150$, so the chains span over $60$ autocorrelation times, comfortably exceeding the recommended run length of several autocorrelation times \cite{foremanmackey2013emcee}. We then discard a further $5\tau$ steps and thin the chains by keeping every $\tau$-th sample, retaining $960$ weakly correlated samples. To make our prediction we consider two approaches: the \textit{mean prediction}, given by the sample mean over the thinned samples, and the \textit{probability prediction}, given by the mean of the ten samples that attained the highest posterior log-probability. These predictions are plotted alongside the true location and the spread of the samples in Figure \ref{fig: BI_hist}. The probability prediction $(1.89, 0.99)$ lies at a Euclidean distance of $0.11$ from the true location, whereas the mean prediction $(2.06, 0.82)$ lies at a distance of $0.19$. This can be understood from the histograms in Figure \ref{fig: BI_hist}, where the peak of the distribution is near the true $y = 1$ coordinate, but the distribution in $x$ is skewed, which biases the mean. When only the ten samples of highest posterior probability are averaged, the method more accurately predicts the true location.

We repeat the experiment with a Gaussian prior of unit covariance centred at the true location $(2, 1)$, with all sampler settings unchanged. The mean acceptance rate is $0.29$ and the autocorrelation time decreases to $\tau\approx 86$, reflecting the fact that the prior concentrates the posterior mass and makes it easier to explore. Thinning as before retains $1744$ weakly correlated samples. The results are shown in Figure \ref{fig: BI_hist}. The additional prior information sharpens the posterior, with the standard deviations decreasing from $(0.63, 0.25)$ previously with the uniform prior to $(0.50, 0.21)$. This improves both predictions: the mean prediction $(2.13, 0.88)$ lies at a distance of $0.18$ from the true location, while the probability prediction $(1.99, 1.01)$ is within $0.02$ of it. In both experiments the probability prediction outperforms the mean prediction, which suggests that for skewed posteriors such as ours, the high-probability samples are a more reliable summary than the posterior mean. This is likely due to the fact that skewed distributions shift the mean away from the mode, and since the posterior here is unimodal, the probability prediction remains near the posterior mode.

 \begin{figure}
     \centering
     \includegraphics[width=0.8\linewidth, trim={4 0 0 0}, clip]{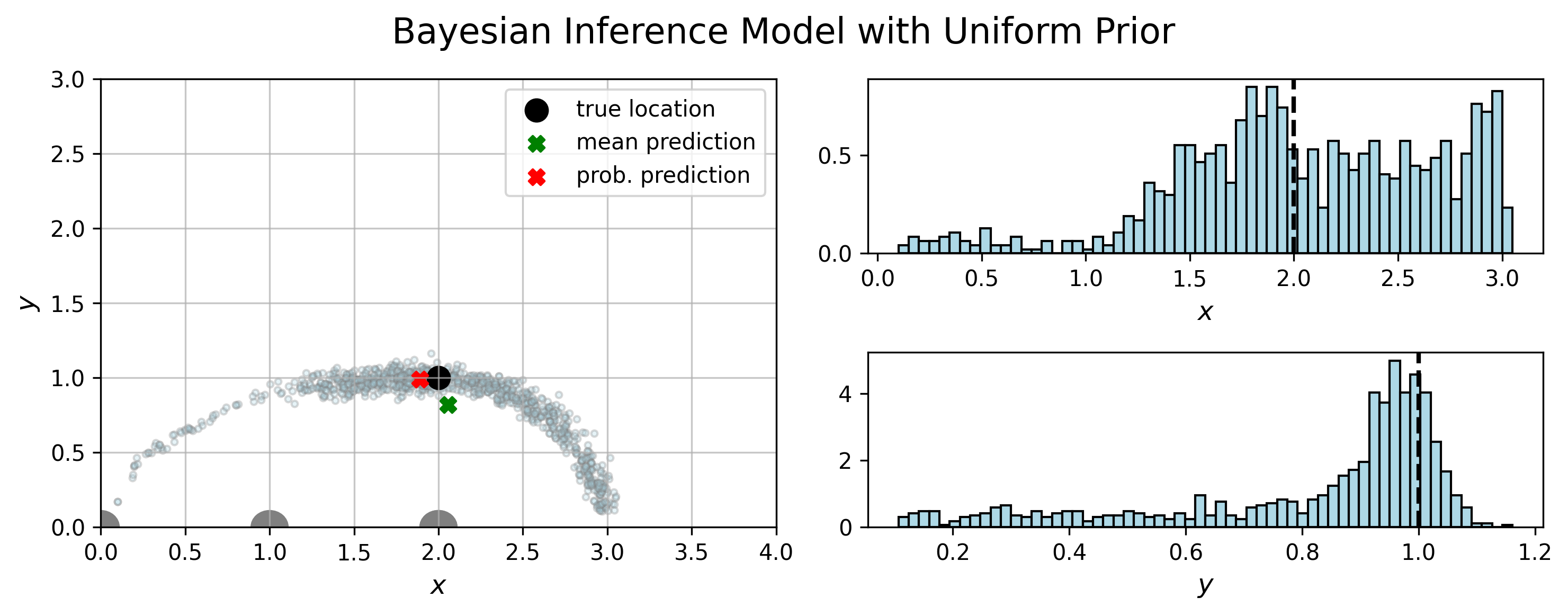}
     \includegraphics[width=0.8\linewidth]{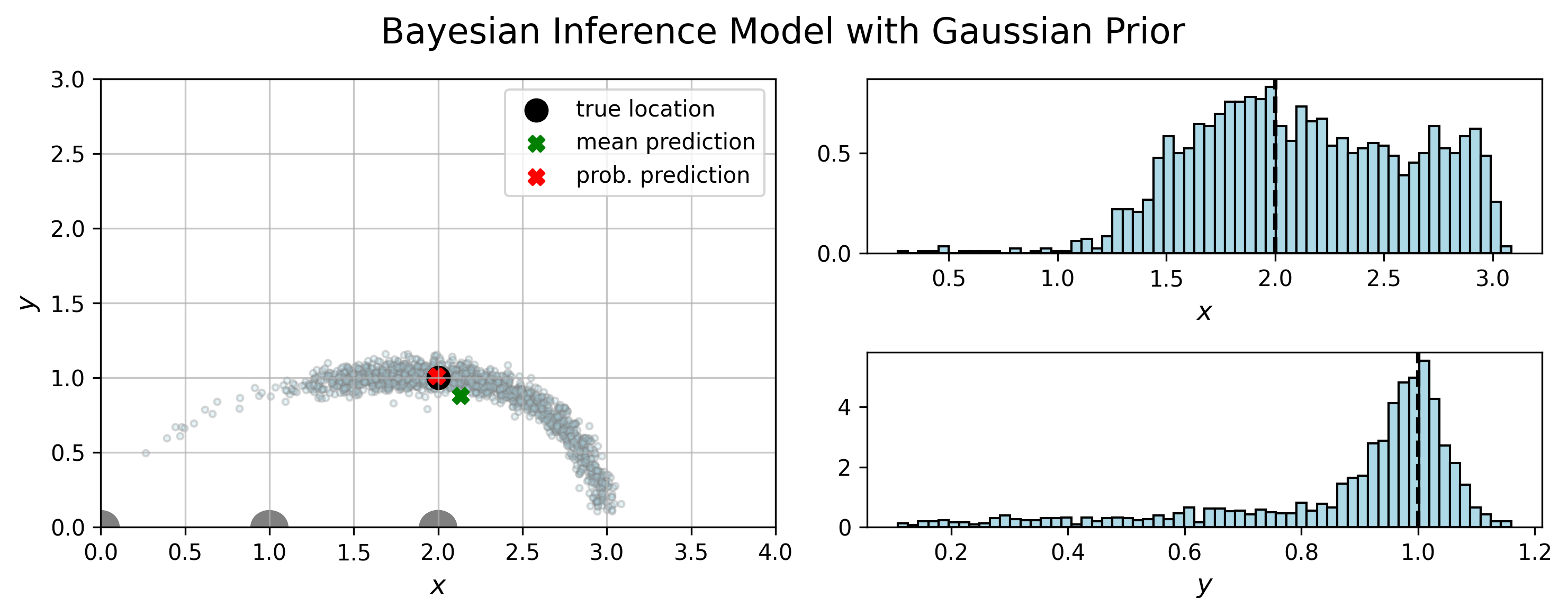}
     \caption{Probability distributions for the Bayesian inference model with uniform and Gaussian priors. On the left, the joint posterior probability distribution for $(x,y)$ is shown, and on the right are histograms representing the marginal distributions. The dashed black horizontal lines in the histograms correspond to the true coordinates. Both predictions are computed from the thinned samples: the mean prediction averages over the full set, whereas the probability prediction averages only over the ten with the highest posterior probability. The resonators are shown in gray (at the bottom of the left-hand plots).}
     \label{fig: BI_hist}
 \end{figure}

\subsection{Predicting radius and position jointly}\label{sec: joint prediction}
In Sections \ref{sec: predicting radius} and \ref{sec: predicting position} we treated the recovery of the intruder's radius $r_\Omega$ and position $(x,y)$ as two separate problems. It is natural to ask whether the two can instead be recovered jointly. As we have established, each symmetric pair contributes a single observation $F_{pq}(r_\Omega,x,y)$, so counting degrees of freedom, we expect that at least three sensing pairs are necessary to recover the three unknowns. Whether three are also sufficient for a unique recovery is what we test below. Our aim here is to establish identifiability in principle, separately from the robustness to noise already examined for the two-dimensional problem, so we work in the noise-free setting and use the dictionary-based model of Section \ref{sec: dictionary model}.

\begin{figure}
    \centering
    \begin{tikzpicture}
    \node at (0,0) (plots) {\includegraphics[width=0.8\textwidth]{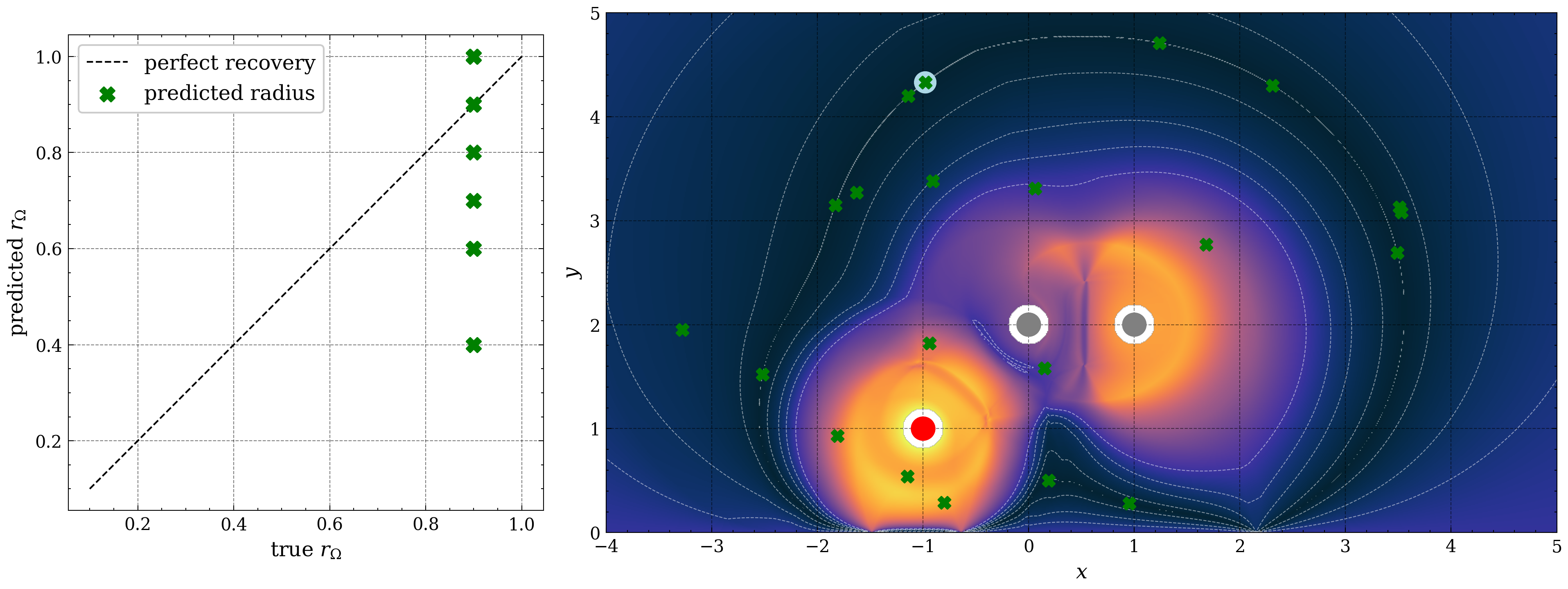}};
    \node[white,scale=0.75,anchor=west] at (4,2) {(a) One pair};
    \begin{scope}[yshift=-4.7cm]
    \node at (0,0) (plots) {\includegraphics[width=0.8\textwidth]{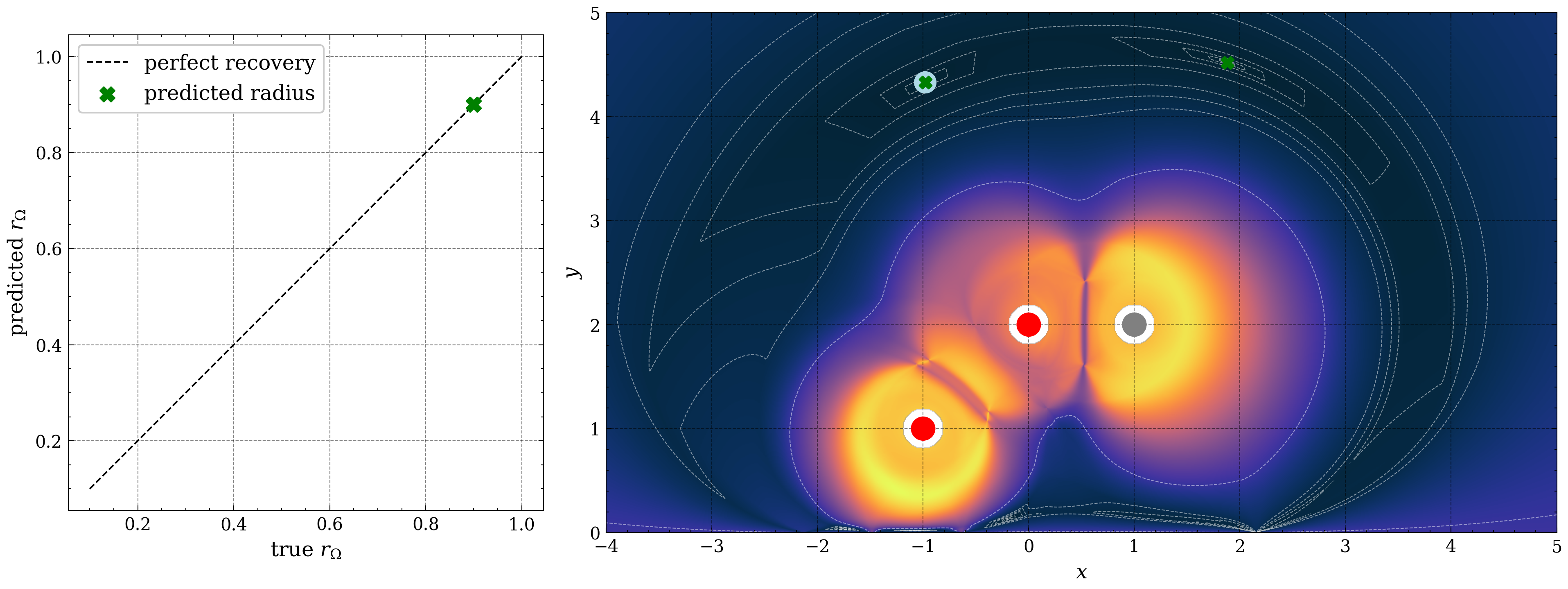}};
    \node[white,scale=0.75,anchor=west] at (4,2) {(b) Two pairs};
    \end{scope}
    \begin{scope}[yshift=-9.4cm]
    \node at (0,0) (plots) {\includegraphics[width=0.8\textwidth]{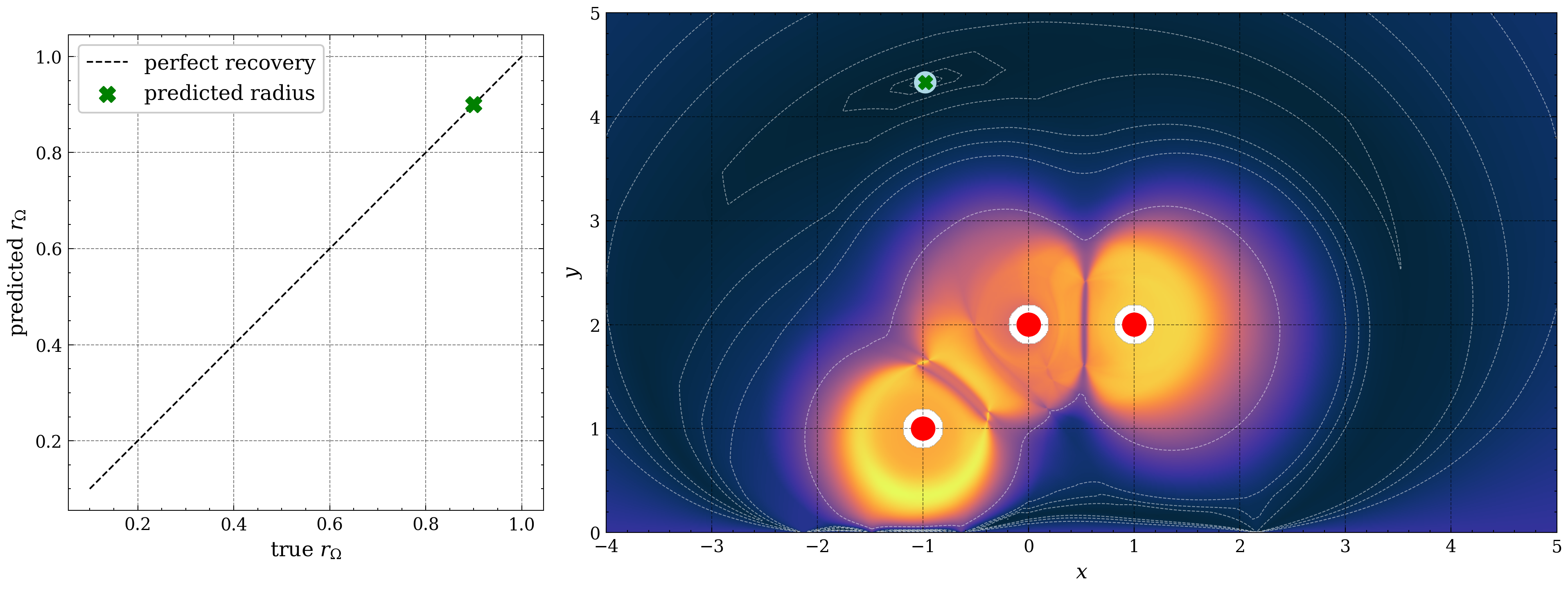}};
    \node[white,scale=0.75,anchor=west] at (4,2) {(c) Three pairs};
    \end{scope}
    \end{tikzpicture}
    \caption{Joint prediction of an intruder's radius and position using a dictionary-based model on the horseshoe arrangement from Figure \ref{fig: horseshoe_pos_many}. Each panel shows the loss surface $F_\text{loss}$ over the position domain, using (a) one, (b) two and (c) three symmetric pairs, respectively. The predictions are given by green crosses, while the true location is marked in white. The sensing pair(s) are shown in red, and the remaining resonators in gray. The intruder has radius $r_\Omega = 0.9$ and position $(x, y) = (-0.98, 4.33)$. Left-hand panels show predicted vs. true radius.}
    \label{fig: joint prediction}
\end{figure}

We will test this using the horseshoe arrangement from Figure \ref{fig: horseshoe_pos_many}, whose single mirror symmetry about $y=0$ restricts our position-sensing domain to the upper half plane. We extend the dictionary-based model by discretising the position domain $[-4,5]\times[0,5]$ together with the radius interval $r_\Omega\in[0.1,1.0]$, with steps $\Delta_{x,y}=0.01\times0.01$ and $\Delta_{r_\Omega}=0.1$. We precompute $F_{pq}(r_\Omega,x,y)$ on this grid for each of the three symmetric pairs of sites. Then, given observations $f_{pq}$ from the pairs, we minimise the combined loss
\begin{equation}
    F_\text{loss}(r_\Omega,x,y) = \sum_{p,q}\left| F_{pq}(r_\Omega,x,y)-f_{pq} \right|,
\end{equation}
summed over the symmetric pairs. This generalises (\ref{eq:Fcombined}) to three dimensions, and the grid point attaining the minimum of the loss landscape is our prediction.

Figure~\ref{fig: joint prediction} illustrates this for an intruder at $(r_\Omega,x,y)=(0.9, -0.98, 4.33)$. Using a single symmetric pair, the predicted point is far from unique. As a result, in Figure~\ref{fig: joint prediction}(a) we see that candidates consistent with the observation are scattered across almost the entire domain and the predicted radius ranges widely from the true value. This is the expected behaviour for a map from $\R^3$ to $\R$ and mirrors the non-uniqueness already reported for position recovery from a single pair in Section \ref{sec: dictionary model}. Using two symmetric pairs narrows down the candidates but still underdetermines the system, as the loss surface retains two distinct valleys for the minimum (hence, we see two different predictions for the position in Figure~\ref{fig: joint prediction}(b)). However, when all three symmetries are combined, the surface collapses to a single valley, giving a unique prediction that exactly recovers the true point.

The example shown here is illustrative rather than exhaustive; a fuller treatment assessing accuracy across the whole domain, and examining how predictions behave under measurement noise is a natural extension. The present results are nonetheless enough to show that the joint inverse problem is not inherently ill-posed, and that the dictionary-based approach extends naturally to the joint setting.

%% file: 5_Conclusions.tex
\section{Concluding remarks}
\label{sec:conclusions}

In this work, we have shown that an array of three-dimensional scatterers can be designed to have latent (hidden) symmetry and have methods to use this to build a sensor, capable of localizing a target. These methods can be as simple as dictionary matching, however more sophisticated approaches, such as Bayesian inference or an artificial neural network (multi-layer perceptron), are more effective when noise is added to the data. To our knowledge, this is the first time latent symmetries have been exploited successfully for sensing problems. It is also the first time latent symmetries have been observed in a three-dimensional open system that cannot be approximated by a sparse graph.

There are several possible extensions to this concept.
A straightforward one is to use geometrically asymmetric systems with more than one pair of latently symmetric scatterers.
This lifts the degeneracy of different positions, thus further enhancing the sensing capabilities. This is hard to achieve in open systems (we failed to find any suitable configurations) but relatively straightforward in networks. 
Secondly, we could replace latent reflection symmetry by more general concepts where the isospectral reduction commutes not with a permutation matrix $M$, but with a generic orthogonal matrix $Q$.
For such a more generalized latent symmetry of two scatterers $u,v$, all eigenmodes have \emph{scaled} parity on them.
A special case of this is the recently introduced concept of fractional cospectrality \cite{Chan2020AQFundamentalsFractionalRevivalGraphs}. Finally, we could develop more sophisticated algorithms for recovering the position and radius from the measured data. While the methods applied here were inexpensive to train and performed well on our problem (including in the presence of simulated noise), they are unlikely to generalise well beyond the training set (as is common of supervised models) and may also struggle in more complex physical environments (e.g. with inhomogeneous background media). One promising direction would be to explore bio-inspired sensing algorithms, as evolution has a range of different solutions to these challenges, for instance using networks with various nonlinearities and randomised connections \cite{ammari2020mimicking, ammari2012statistical, bruhin2022bioinspired, christiansen2025morphogenesis, hayes2002distributed, li2006moth, yang2010new, zhou2024bioinspired}.